%% file: arxiv_main.tex
\documentclass{article}

\usepackage{arxiv}

\usepackage[utf8]{inputenc} %
\usepackage[T1]{fontenc}    %
\usepackage{hyperref}       %
\usepackage{url}            %
\usepackage{booktabs}       %
\usepackage{amsfonts}       %
\usepackage{nicefrac}       %
\usepackage{microtype}      %
\usepackage{lipsum}		%
\usepackage{graphicx}
\usepackage{natbib}
\usepackage{doi}

\usepackage{hyperref}
\usepackage{url}

\usepackage[utf8]{inputenc}
\usepackage[T1]{fontenc}
\usepackage{microtype}
\usepackage{amsmath}
\usepackage{amssymb}
\usepackage{amsthm}
\usepackage{graphicx}
\usepackage{booktabs}
\usepackage{algorithm}
\usepackage{algorithmic}
\usepackage{subcaption}
\usepackage{xcolor}
\usepackage{cleveref}

\graphicspath{{figures/}{./figures/}{./}}

\usepackage{xspace}
\usepackage{wrapfig}
\input{math.tex}

\newcommand{\method}{\textsc{Metro}-WM\xspace}
\newcommand{\methodfull}{\textsc{Metro}-WM\xspace} %
\newcommand{\methodlewm}{\textsc{Metro}-LeWM\xspace}
\newcommand{\methodpldm}{\textsc{Metro}-PLDM\xspace}

\newcommand{\zobs}{z_{\mathrm{obs}}}
\newcommand{\zgoal}{z_{\mathrm{goal}}}
\newcommand{\zsub}{z_{\mathrm{sub}}}
\newcommand{\pilo}{\pi_{\mathrm{lo}}}
\newcommand{\pihi}{\pi_{\mathrm{hi}}}

\newcommand{\Dset}{\mathcal{D}}
\newcommand{\Real}{\mathbb{R}}
\newcommand{\keypoint}[1]{\noindent\textbf{#1}\quad}
\theoremstyle{plain}
\newtheorem{proposition}{Proposition}
\theoremstyle{definition}

\title{\methodfull: Long-Horizon Latent Planning\\
       with Realisable Sub-Goals}

\author{ 
    Royson Lee \\
    Samsung AI\\
    Cambridge, UK\\
	\texttt{royson.lee@samsung.com} \\
	\And
	Fady Rezk \\
    Samsung AI, Cambridge, UK\\
    University of Edinburgh, UK\\
    \texttt{f.rezk@samsung.com} \\
	\And
	Titouan Parcollet \\
	Samsung AI\\
    Cambridge, UK\\
	\texttt{t.parcollet@samsung.com} \\
	\AND
	Timothy Hospedales \\
	Samsung AI, Cambridge, UK\\
    University of Edinburgh, UK\\
	\texttt{t.hospedales@samsung.com} \\
    \And
	Cristina Cornelio \\
	Samsung AI\\
    Cambridge, UK\\
	\texttt{c.cornelio@samsung.com} \\
}

\date{}

\renewcommand{\shorttitle}{\methodfull: Long-Horizon Latent Planning with Realisable Sub-Goals}

\begin{document}
\maketitle

\begin{abstract}
Model-predictive control with Joint-Embedding Predictive Architectures (JEPAs) provides a strong zero-shot goal-reaching planner, but it is only effective over short planning horizons. Hierarchical extensions attempt to bridge this gap by learning a macro planner to predict intermediate latent sub-goals to guide the micro planner. In this work, we demonstrate that unconstrained latent sub-goal prediction is fundamentally flawed. A rigorous evaluation reveals that a leading state-of-the-art macro planner routinely emits physically unrealisable sub-goals. To resolve this, we introduce \method, a hierarchical framework that issues sub-goals by retrieving genuine states from prior experience rather than generating ungrounded latent vectors. Specifically, \method constructs a graph whose vertices are observed frames from offline expert demonstrations or random-action trajectories, allowing frames from different episodes to be connected and stitched into routes to the goal. Planning over the full graph also makes the system highly robust to execution errors: if the micro planner drifts off course, \method instantly finds a new optimal path from the current state. Our experiments show that \method achieves superior long-horizon success rates of up to $37.33$ percentage points over the next best hierarchical approach while being up to $10.9\times$ faster, requiring both $13\text{--}56\times$ less offline compute and fewer tuned hyperparameters. Additional analysis reveals that \method finds shorter paths than the offline demonstrations, outperforms an oracle relying on the query's own demonstration, and maintains robust performance under extremely sparse dataset conditions.
\end{abstract}

\section{Introduction}
A central goal in autonomous machine intelligence is to learn an internal model of the world that is capable of predicting how the physical environment transforms when an action is applied~\citep{lecun2022path}. However, predicting every low-level pixel or sensory detail of future states is often computationally intractable, and in many cases, irrelevant for decision-making. One of the ways to overcome this is for the world model to learn abstract, high-level dynamics in a reward-free manner, allowing it to be used at test time to plan towards novel goals that were not seen during training.

Joint Embedding Predictive Architectures (JEPAs)~\citep{lecun2022path,assran2023ijepa,bardes2024vjepa} 
provide a concrete realisation of this idea by learning compact representations through prediction in latent space. 
JEPA-based world models~\citep{zhou2024dinowm,sobal2025pldm,maes2026lewm} build on these representations by pairing them with an action-conditioned latent dynamics model, making planning a viable alternative to policy learning.
These models enable zero-shot, goal-conditioned model-predictive control (MPC),
which evaluates candidate action sequences by comparing their predicted latent outcomes with the encoded goal.
However, such flat planners are reliable only over short horizons: as the number of steps increases, prediction errors accumulate and latent distances become less informative. 
Yet many tasks we would like to perform, such as navigating around obstacles or multi-step object manipulation, require long sequences of actions. To be able to fully realise the promise of learned world models, we therefore need planners that remain reliable beyond short, locally solvable tasks.

To extend the planning horizon, recent hierarchical JEPA frameworks~\citep{zhang2026hierarchical, masip2026ffjepa} add a macro planner that generates intermediate latent sub-goals, decomposing long-horizon tasks into segments that a micro planner can solve reliably. 
However, these sub-goals are not constrained to the small subset of latent space corresponding to physically realisable states.
\citet{caselli2026mindthegap} found that unconstrained search around macro-actions generate poor quality sub-goals and proposed constraining the search to macro-actions derived from training trajectories.
Extending these observations, we introduce the \emph{realisability residual}, a decoder-free measure of sub-goal realisability that avoids the visual artefacts of  
decoders~\citep{kurutach2018causalinfogan,nair2018rig,pertsch2020lvd}. 
Using this measure, we show that existing mitigations 
remain insufficient: state-of-the-art macro planners still routinely generate sub-goals that do not correspond to any physically realisable state.

\begin{figure}[t]
  \centering
  \includegraphics[width=\linewidth]{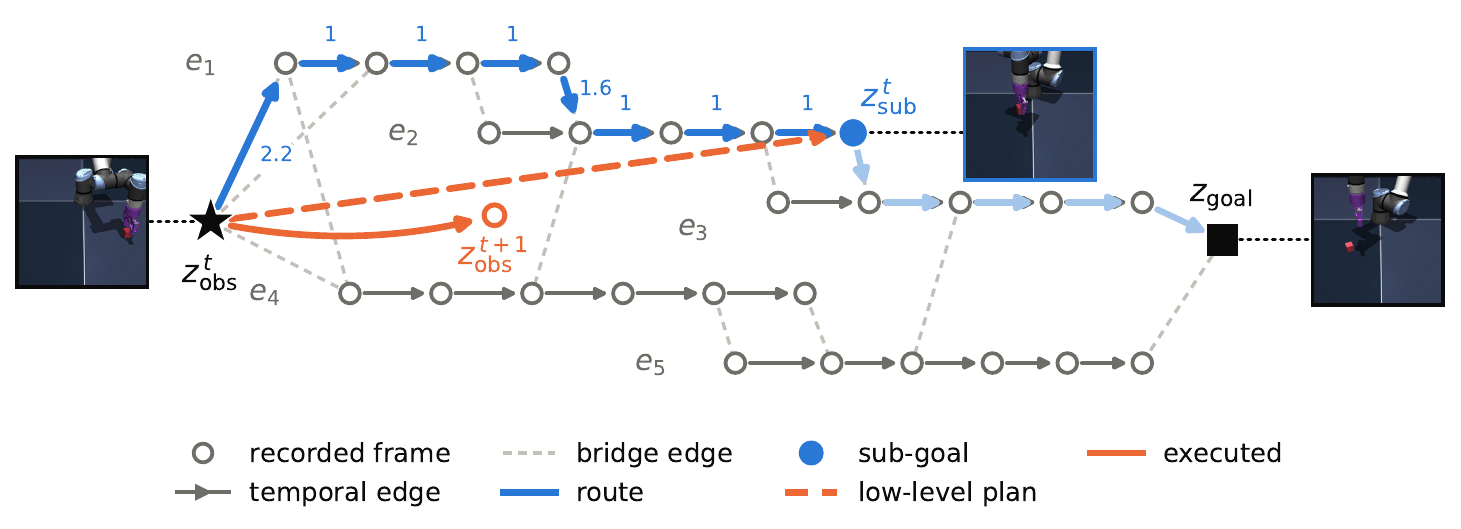}
  \caption{\textbf{One cycle of \method.} Rather than generating sub-goals, \method retrieves them from previously seen episodes.
  The macro planner searches the graph whose vertices represent observed frames encoded as latent states, with $e_1 \dots e_5$ in the figure indicating the specific episodes containing these frames. 
  Directed \textit{temporal edges} (solid grey) connect consecutive frames within each episode, while
  symmetric \textit{bridge edges} (dashed grey) link visually similar frames across episodes (example endpoints in Appendix Figure~\ref{fig:stitch}). 
  All edge costs are expressed in frame units: Temporal edges have unit cost, while bridge-edge costs are converted into frame-equivalent costs derived from empirical dataset quantiles.
  Given a goal $\zgoal$, we use Dijkstra's algorithm to search backward from $\zgoal$ to find the lowest-cost route from every graph vertex. 
  At each planning cycle, \method add the current observation $\zobs^t$ to the graph, and identifies the shortest route (blue) to $\zgoal$ (which may stitch together multiple episodes).
  The furthest frame along this route that is within the micro planning horizon is then selected as the next sub-goal $\zsub^t$. 
  After execution (solid orange), planning resumes from the reached state $\zobs^{t+1}$ (see Appendix Figure~\ref{fig:reentry}).}
  \label{fig:steps}
\end{figure}

This lack of a realisability guarantee motivates our method, \method, a hierarchical framework that replaces
latent sub-goal generation with retrieval from recorded episodes\footnote{We use the terms episodes and demonstrations interchangeably.}. See Figure~\ref{fig:steps} for an overview.
Akin to navigating a transit network, \method's macro planner searches a graph of observed frames for routes to the goal, then selects vertices along the route as sub-goals for the micro planner.
By retrieving sub-goals exclusively from recorded episodes, \method guarantees that each sub-goal encodes a physically realisable state.

Our contributions are threefold: 
\textit{i)} The realisability residual, a decoder-free measure showing that the state-of-the-art hierarchical macro planners generate sub-goals that do not correspond to encodings of realisable states; 
\textit{ii)} \method, a non-parametric macro planner that builds a graph over recorded (expert or random-action) episodes, connecting consecutive frames within each episode and visually similar frames across episodes. The planner uses shortest-path search to find routes (potentially stitching together multiple trajectories) and selects graph nodes along them as sub-goals, thereby guaranteeing their physical validity;
\textit{iii)} An evaluation on four benchmarks with two distinct micro planners LeWM~\citep{maes2026lewm} and PLDM~\citep{sobal2025pldm} where \method achieves superior long-horizon success rates by up to $37.33$ pp over the next-best hierarchical approach while being up to $10.9\times$ faster and requiring $13-56\times$ less offline compute. \method also frequently finds paths shorter than the expert demonstrations (when provided), outperforms an oracle that uses ground-truth sub-goals, and remains robust in very sparse data regimes.

\section{Related work}
\label{sec:related}

Hierarchical planning tackles the challenges of long-horizon control by decomposing a task into shorter segments, with sub-goals either \emph{generated} by a trained model or \emph{retrieved} from recorded experience. 
Within the domain of JEPAs and latent world models, existing hierarchical approaches are overwhelmingly generative, relying on continuous optimization and/or learned macro-policies to predict latent sub-goals~\citep{zhang2026hierarchical,caselli2026mindthegap,masip2026ffjepa}. 
Conversely, the paradigm of retrieving sub-goals from data is well-established, but has been predominantly confined to classical reinforcement learning (RL)~\citep{eysenbach2019sorb,emmons2020sgm} and visual navigation~\citep{savinov2018sptm,shah2021ving}, where agents query episodic memory buffers and/or traverse spatial topological maps. 

\method bridges this gap, bringing the physical guarantees of retrieval-based sub-goals to JEPA-based latent planning. This adaptation introduces three fundamental distinctions: \textit{1)} we tackle the distinct problem of guiding a zero-shot latent micro planner while strictly preventing sub-goal hallucination rather than maximizing extrinsic rewards. \textit{2)} we calibrate frame-based edge weights directly from the empirical dataset quantiles instead of relying on trained value functions or critics. \textit{3)} our macro planner remains completely non-parametric and training-free.
An extended discussion on previous works can be found in Appendix Section~\ref{app:related}.

\section{Preliminaries}
\label{sec:prelim}

\paragraph{Goal-conditioned latent planning.}
Let $\phi$ be a frozen encoder mapping an observation $o$ to a latent state $z = \phi(o) \in \Real^{D}$, and $f$ a frozen latent dynamics model (predictor) trained alongside
it, so $f(z_t, a_t)$ predicts the encoding of the next observation under action
$a_t$. We take $(\phi, f)$ as given; in our experiments
we use the released LeWorldModel (LeWM)~\citep{maes2026lewm} and PLDM~\citep{sobal2025pldm} checkpoints. Following convention, we chunk actions in blocks of $5$ environment frames so a plan of $h$ blocks spans $5h$ frames, where $h$ is the latent planner's planning horizon. Given a goal observation, $o_{\mathrm{goal}}$, and writing
$\zgoal = \phi(o_{\mathrm{goal}})$ and $\zobs = \phi(o_t)$, a flat latent planner solves, at every replanning step,
\begin{equation}
  a^{\star}_{1:h} \;=\; \argmin_{a_{1:h}} \;
    \bigl\lVert \hat{z}_{h} - \zgoal \bigr\rVert_2^{2},
  \qquad
  \hat{z}_{0} = \zobs, \quad
  \hat{z}_{t+1} = f(\hat{z}_{t}, a_{t}),
  \label{eq:flat-mpc}
\end{equation}
executes a prefix of $a^{\star}_{1:h}$, re-encodes, and repeats, minimizing by
a sampling optimiser. Following LeWM, we adopt the cross-entropy method~\citep{rubinstein1999cem} (CEM) in all our experiments.
Eq.~\ref{eq:flat-mpc} evaluates new goals zero-shot by relying solely on a terminal cost. Consequently, it is only effective if the latent distance metric $\lVert \cdot - \zgoal \rVert$ can successfully distinguish between states across the entire length of the planned trajectory. This strictly limits $h$: as $h$ lengthens, $f$ accumulates prediction errors and the latent distance metric saturates. Saturation serves as the fundamental bottleneck: even a perfect predictor cannot rank distant candidates, as demonstrated in Appendix Section \ref{app:saturation}.
We call Eq.~\ref{eq:flat-mpc} the \emph{micro} planner $\pilo$, reliable only at short planning horizons.

\paragraph{Hierarchical latent planning.}
To enable longer horizon planning, hierarchical latent planners insert a \emph{macro} level. A
macro planner $\pihi$ is given $(\zobs, \zgoal)$ and emits a latent sub-goal
\begin{equation}
  \zsub \;=\; \pihi(\zobs, \zgoal),
  \qquad
  \pihi : \Real^{D} \times \Real^{D} \;\to\; \Real^{D},
  \label{eq:hier}
\end{equation}

which replaces $\zgoal$ in Eq.~\ref{eq:flat-mpc} so $\pilo$ only ever covers a distance it is competent over.
While implementations of $\pihi$ vary, ranging from learned macro actions and sub-goal policies to optimised expert actions, they all follow the structure of Eq.~\ref{eq:hier}. However, we identify a core issue with this formulation: $\zsub$ is produced by optimising over the entire latent space $\Real^{D}$, but $\Real^{D}$ is much larger than the set of valid, encodable states. If $\mathcal{S}$ represents the environment states, the set of realisable latents is $\mathcal{Z} = \{\, \phi(\mathrm{render}(s)) : s \in \mathcal{S} \,\}$ where $\mathrm{render}(\cdot)$ maps the underlying physical state to the visual observation. Because $\mathcal{S}$ is often low-dimensional, $\mathcal{Z}$ forms a very thin manifold within $\Real^{D}$. Since nothing forces the optimization to keep $\zsub$ inside $\mathcal{Z}$, the planner can output a sub-goal outside this manifold, representing a physically non-existent state.

\section{Are macro latent sub-goals realisable?}
\label{sec:realizability}

Previous works~\citep{zhang2026hierarchical, caselli2026mindthegap} visualise latent sub-goals using a decoder, trained on demonstrations from the dataset used to train the models,
which confounds sub-goal quality with the decoder's reconstruction ability. 
We, therefore, introduce an evaluation protocol isolating the evaluation to the environment's renderer and frozen encoder $\phi$, while remaining strictly independent of the provided demonstrations. 
Specifically, we define the \textbf{realisability residual} as:
\begin{equation}
  r(z) \;=\; \min_{s \,\in\, \mathcal{S}} \;
    \bigl\lVert\, \phi\bigl(\mathrm{render}(s)\bigr) - z \,\bigr\rVert_2
  \;=\; \mathrm{dist}\bigl(z, \mathcal{Z}\bigr).
  \label{eq:rz}
\end{equation}
where a large $r(z)$ means no physical state encodes to $z$. Exhaustive search over $\mathcal{S}$ is intractable, so we approximate the minimum with CEM. CEM only upper-bounds $r(z)$, so we rule out under-searching in two ways: we tune every CEM hyperparameter towards lower $r$, and we verify that substantially increasing the search budget leaves the median $r(z)$ unchanged (Appendix~\ref{sec:appendix-realizability}).

\begin{figure}[t]
  \centering
  \includegraphics[width=\linewidth]{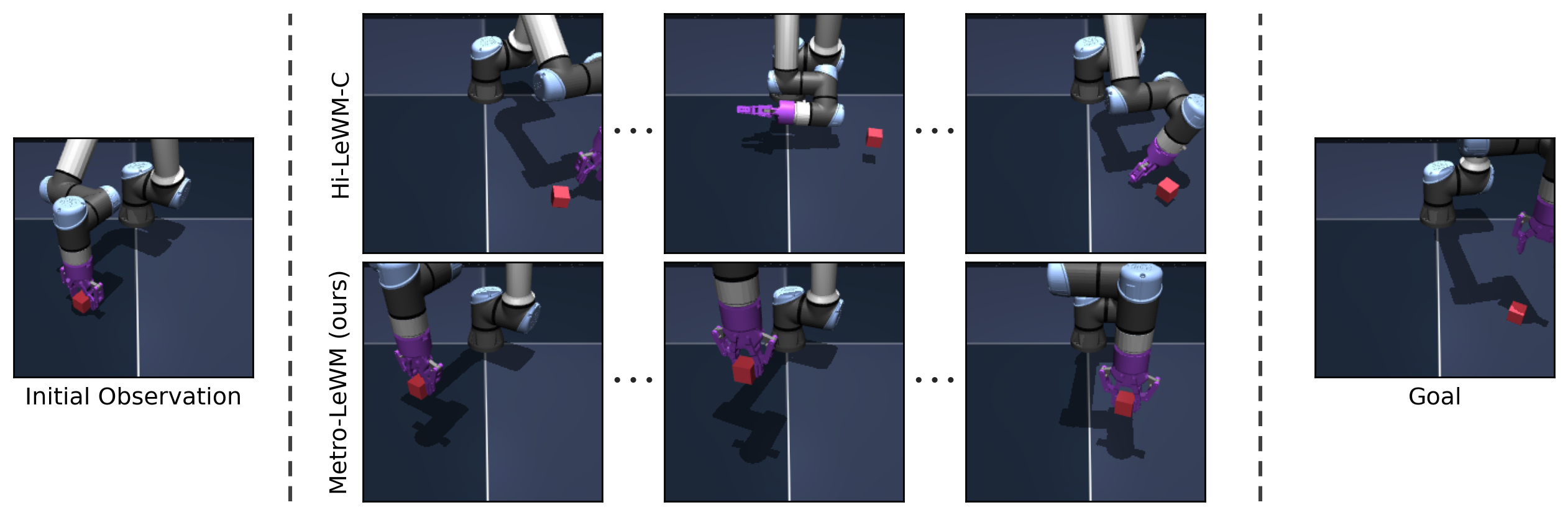}
  \caption{Decoder-free subgoal analysis. Top row: closest \emph{realizable} states to the non-realisable sub-goals emitted by Hi-LeWM-C macro planner (Eq.~\ref{eq:rz}), drawn by the environment's own renderer. Bottom row: Our  \method sub-goals, which are real states retrieved from the training dataset. Micro  actuation - constrained to \emph{realisable} states - will not progress toward the final goal when pursuing sub-goals issued by Hi-LeWM-C, but will progress when using \method sub-goals.}
  \label{fig:realizability-cube}
\end{figure}

We apply this to the macro-planner sub-goals of the state-of-the-art hierarchical LeWM, Hi-LeWM-C~\citep{caselli2026mindthegap}, on \textsc{OGB-Cube}~\citep{park2025ogbench} and \textsc{PushT}~\citep{chi2023diffusionpolicy}. On both, CEM finds no nearby physical state: the median $r(z)$ is $0.9$ on \textsc{Cube} and $2.3$ on \textsc{PushT} (Fig.~\ref{fig:realizability-cube}; Appendix Fig.~\ref{fig:realizability-pusht}). The scale of $r(z)$ is not interpretable in isolation. However, a residual that stays above zero as the search budget grows indicates that no realisable state exists. This motivates our approach, which emits only sub-goals that are encodings of real states.

\section{\method: sub-goal selection by shortest paths}
\label{sec:method}

Section~\ref{sec:realizability} highlights that latent sub-goals in previous works are often unrealisable, yielding a clear design constraint: the sub-goal issued to $\pilo$ must correspond to a physically plausible state. Any previously recorded frame trivially meets this requirement and the challenge then lies in selecting a \emph{useful} state, one that is both within $\pilo$'s reach and on a route to meet the goal. To achieve this, \method replaces the prediction of Eq.~\ref{eq:hier} with a shortest-path search over prior experience. 

\subsection{The graph of recorded frames}

To enable a shortest-path search over prior experience, we model the dataset as a weighted graph $G = (V, E, w)$.
Assuming a corpus of $M$ recorded trajectories $\Dset = \{\tau^{(m)}\}_{m=1}^{M}$, where
$\tau^{(m)} = (o^{(m)}_{1}, a^{(m)}_{1}, \dots, a^{(m)}_{L_m - 1}, o^{(m)}_{L_m})$ has length $L_m$, the vertices are the encoded frames, $z^{(m)}_{t} = \phi(o^{(m)}_{t})$: $V = \{ (m,t) : 1 \le m \le M,\, 1 \le t \le L_m \}$. The edges $E$ connect these frames and are divided into two types:
\\\textbf{Temporal edges} join consecutive frames within the same episode, $E_{\mathrm{tmp}} = \{ (m,t) \to (m,t{+}1) \}$, each costing one frame ($w(e) = 1$). They are transitions a demonstrator performed, making them feasible and strictly \emph{directed} by construction as many manipulation tasks are irreversible.
\\\textbf{Bridge edges} connect a vertex to its $k$-nearest neighbours (kNN) \emph{cross-episodes}, within a radius $\varepsilon$:
\begin{equation}
  E_{\mathrm{brg}} = \bigl\{ u \leftrightarrow v \;:\;
    v \in \mathrm{kNN}_{k}(u),\;
    m(v) \neq m(u),\;
    \lVert z_u - z_v \rVert_2 \le \varepsilon \bigr\}.
  \label{eq:bridge}
\end{equation}
A bridge asserts that two frames from different demonstrations represent a similar physical situation, so it is symmetric; bridges are what allow a route to leave one demonstration and continue along another. Since $k$-nearest-neighbour retrieval is not itself symmetric, we symmetrize the retrieved block by elementwise maximum, storing each undirected bridge exactly once.

\subsection{Bridge edge weights}
A naive bridge weight based directly on latent distance violates the additive property necessary for shortest-path search. Because latent spaces saturate (Appendix Section~\ref{app:saturation}), accumulating latent distances fails to properly rank long, multi-hop routes. Hence, we express every bridge weight in frames, converting latent distance into frames using the demonstrations themselves.  For each frame separation $d \in {1, \dots, H}$, we compute $\Lambda(d) = Q_{\alpha}(\{ \lVert z^{(m)}_{t+d} - z^{(m)}_{t} \rVert_2 : (m,t) \})$, which represents a low empirical quantile ($\alpha$) of latent distances over that specific time gap. Restricting the grid to the waypoint budget $H$ avoids extrapolating past $\pilo$'s reliable range and keeps $\Lambda$ below saturation. Furthermore, we enforce monotonicity by taking the running maximum of $\Lambda(d)$ wherever the empirical quantile dips, ensuring that a larger latent gap is never charged fewer frames. The bridge weight, $w(u \leftrightarrow v) = \kappa(\lVert z_u - z_v\rVert_2)$, is the piecewise-linear interpolant of the knots ${(\Lambda(d), d)}_{d=1}^{H}$. 
For a latent gap $\delta$ in the interval $[\Lambda(d), \Lambda(d+1)]$, this interpolant is calculated and clipped to range:

\begin{equation}
\kappa(\delta) =
\operatorname{clip}\left(
d + \frac{\delta - \Lambda(d)}{\Lambda(d+1) - \Lambda(d)},
1, H
\right).
\label{eq:kappa}
\end{equation}
A \emph{low} quantile $\alpha$ makes $\kappa$ deliberately conservative. By mimicking a slow demonstrator, it charges more frames for a gap, preventing the dangerous failure mode of under-pricing, which would trick the search into buying un-executable shortcuts. 

\keypoint{Consolidating hyperparameters.}
The bridge radius $\varepsilon$, quantile $\alpha$, and waypoint budget $H$ all redundantly control the maximum sub-goal distance. To simplify tuning, we fix $\alpha$ to a conservative constant ($\alpha = 0.25$) and set $\varepsilon = \Lambda(H)$, leaving $H$ as the \emph{sole hyperparameter}. In our experiments, we tune only $H$ alongside the micro planner's horizon $h$.

\subsection{Graph search and sub-goal selection}
\label{sec:dijkstra}
To initiate the search for each evaluation query, we insert its goal as a vertex $g$ with latent representation $\zgoal$, connecting it to its $k$-nearest admissible vertices via bridge edges. To ensure fair evaluation, we delete the ground-truth query episode's trajectory $m^{\star}$ from the graph. We then run Dijkstra's algorithm~\citep{dijkstra1959} backward from $g$, yielding a cost-to-go $J(u)$ and an optimal successor $\mathrm{succ}(u)$ for every vertex:
\begin{equation}
  J(u) = \min_{\text{paths } u \rightsquigarrow g} \sum_{e} w(e),
  \qquad
  \mathrm{succ}(u) = \argmin_{v \,:\, u \to v} \bigl[ w(u \to v) + J(v) \bigr].
  \label{eq:dijkstra}
\end{equation}
Crucially, because this search, run once per query, computes a global value function rather than a single trajectory, it is highly robust to execution errors. Regardless of where the micro planner ($\pilo$) lands after attempting an action, the current state can query the optimal continuation without needing to recalculate a path (See Appendix Figure~\ref{fig:reentry}). 

\keypoint{Connecting the current state.}
Because the current observed state $\zobs$ is continuously changing and is not a pre-existing vertex, it must be attached to the graph dynamically. Specifically, we evaluate the optimal path from $\zobs$ to the $\zgoal$ by applying a one-step Bellman backup over its $k$-nearest neighbours in the graph. The neighbour that minimizes this backup is the optimal entry point $u^{\star}$:
\begin{equation}
  u^{\star} \;=\; \argmin_{u \,\in\, \mathrm{kNN}_{k}(\zobs),\;
                           \lVert \zobs - z_u \rVert \le \varepsilon}
    \; \Bigl[ \underbrace{\kappa\bigl( \lVert \zobs - z_u \rVert_2 \bigr)}_{\text{cost of entering}}
      \;+\; \underbrace{J(u)}_{\text{cost of finishing}} \Bigr],
  \qquad
  J(\zobs) = \min_{u} \bigl[ \cdot \bigr],
  \label{eq:entry}
\end{equation}
where the entry cost is evaluated as a directed bridge, $w(\zobs \to u) = \kappa(\lVert \zobs - z_u \rVert_2)$. Because $\zobs$ is inserted with out-edges only, this backup yields the exact shortest-path cost $J(\zobs)$ from the current state to the goal (Proof in Appendix Section~\ref{app:proofs}). If no candidate lies within the $\varepsilon$ radius, we aggressively attach to the nearest candidate regardless of distance to ensure the micro planner always has a valid path forward.

\keypoint{Selecting a sub-goal.}
Starting from the entry vertex $u^{\star}$, we traverse the shortest path, accumulating the total weight cost $C(\ell)$. We select the farthest vertex on this path that remains \emph{within} the waypoint budget $H$:
\begin{equation}
  \begin{gathered}
    u_{0} = u^{\star}, \qquad u_{j+1} = \mathrm{succ}(u_j),
    \qquad
    C(\ell) = w(\zobs \to u_0) + \sum_{j=0}^{\ell-1} w(u_j \to u_{j+1}), \\[2pt]
    \ell^{\star} = \max\bigl\{\, \ell \ge 1 \;:\; C(\ell) \le H \,\bigr\},
    \qquad
    \zsub = z_{u_{\ell^{\star}}} .
  \end{gathered}
  \label{eq:walk}
\end{equation}
The condition $\ell \ge 1$ guarantees that the search advances by at least one hop, even if the initial entry cost $w(\zobs \to u_0)$ immediately overshoots the budget, ensuring that the walk always advances. In the case where there is no valid path from $\zobs$ to $\zgoal$, i.e. the graph is disjoint, we route towards the reachable vertex closest to the goal in latent space. Once graph progress is exhausted, we emit $\zgoal$ directly, allowing the system to gracefully degrade to direct goal-seeking rather than failing outright.

\subsection{Summary: end-to-end pipeline}
\label{sec:method-pipeline}

The \method pipeline operates across three distinct stages:

\textbf{Offline preprocessing (once per corpus):} We encode the dataset $\Dset$, estimate the empirical quantile function $\Lambda$, and set the bridge radius $\varepsilon = \Lambda(H)$. We then retrieve the $k$-nearest neighbours for every vertex to build the temporal edges $E_{\mathrm{tmp}}$ and bridge edges $E_{\mathrm{brg}}$, pricing them via $\kappa$ (Eq.~\ref{eq:kappa}).

\textbf{Planning (once per episode):} We encode the target goal observation, insert it into the graph as $\zgoal$, and delete the query's ground-truth episode $m^{\star}$. We then run Dijkstra's algorithm from $\zgoal$ to compute the distance-to-goal $J$ and the optimal successor $\mathrm{succ}$ for the entire graph (Eq.~\ref{eq:dijkstra}).

\textbf{Online (every macro step):} We encode the current observation $\zobs$, enter the graph via Eq.~\ref{eq:entry}, walk the successor function within the waypoint budget $H$, and hand the selected sub-goal $\zsub$ to micro planner $\pilo$ (Eq.~\ref{eq:walk}). Because $\pilo$ is imperfect, it will inevitably land in an unpredicted state. At the next step, we simply re-encode this new state and re-enter the graph. 

We show the graph traversal and sub-goal selection in Fig.~\ref{fig:steps} and re-entry in Appendix Fig.~\ref{fig:reentry}. We also provide the full algorithm in Appendix Section~\ref{app:algorithm}.

\section{Evaluation}
\label{sec:results}

\subsection{Experimental Setup}
\label{sec:results-setup}

\keypoint{Model and Benchmarks.} We use LeWM~\citep{maes2026lewm} and PLDM~\citep{sobal2025pldm} as our frozen micro planners in all experiments. For \method, we plug the micro planners into \method, referred as \methodlewm and \methodpldm respectively, and use the micro planner's shipped checkpoints and CEM~\citep{rubinstein1999cem} configurations and hyperparameters, tuning only its planning horizon $h$. We evaluate on the four goal-conditioned benchmarks LeWM was trained on: \textsc{PushT}~\citep{chi2023diffusionpolicy,florence2021ibc}, OGBench-\textsc{Cube}~\citep{park2025ogbench}, \textsc{TwoRoom}~\citep{zhou2024dinowm,sobal2025pldm}, and \textsc{Reacher}~\citep{tassa2018deepmind}.

\keypoint{Metrics and Baselines.} A goal is the observation $d$ frames after the episode start, with $d \in \{25, 50, 75\}$. Each result is evaluated on the same 50 query episodes per benchmark, repeated thrice on different planner seeds. We report mean and standard deviation of the total success rate and latency at varying solver budgets ($T$) over all episodes. Our baselines include flat micro planners LeWM and PLDM, hierarchical LeWM (Hi-LeWM), Hi-LeWM-C, and its staged variant~\citep{caselli2026mindthegap}. 

\keypoint{Hyperparameters.} For the flat micro planners, we extend their horizon $h$ accordingly to $d$ and evaluate it on 3 different budgets $T={2d, 4d, 6d}$. For the other baselines, we use the default shipped hyperparameter configuration from the authors for \textsc{PushT} and \textsc{Cube}. As these baselines don't include checkpoints for \textsc{TwoRoom} and \textsc{Reacher}, we adopt their codebase for training them. We then do a hyperparameter sweep, following their paper, to evaluate these checkpoints and report the best performing result on 3 budgets $T={d, 2d, 4d}$. Lastly, for \method, we sweep $h$ and $H$. Each $(h,H)$ cell is run at a max budget of $T=4d$ and success rate is read off every smaller $T$. The best $(h,H)$ moves with $T$, so we report the non-dominated set of this $(h, H, T)$ search in success against wall clock latency.

Complete details of all model, baselines, benchmarks, and hyperparameters to faciliate reproducibility can be found in Appendix Section~\ref{sec:appendix-setup}.

\subsection{Main Result}
\label{sec:results-main}

\begin{figure}[t]
  \centering
  \includegraphics[width=\linewidth]{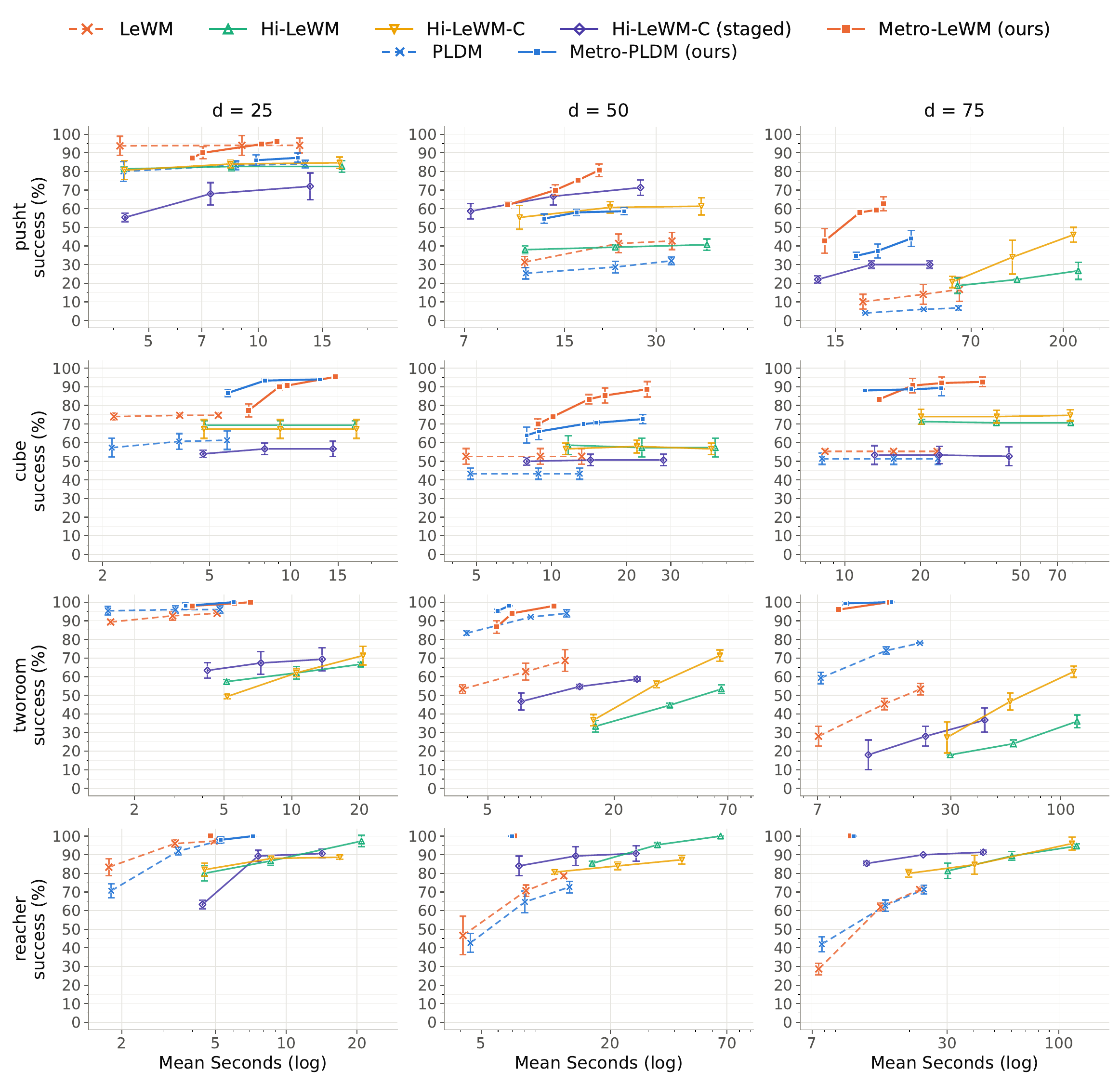}
  \caption{Performance and cost comparison between \method and baselines across varying budgets. Four benchmarks $\times$ three goal distances (planning horizons) $d$.}
  \label{fig:pareto}
\end{figure}

Figure~\ref{fig:pareto} plots success against online wall clock for all four benchmarks and three goal distances. All methods are evaluated with a range of planning budgets, defining a pareto front per method.

\keypoint{Against hierarchical baselines.}
At the longest horizon ($d=75$), \methodlewm's best configuration beats the strongest hierarchical baseline (Hi-LeWM-C) by $+16.67$, $+18.00$, $+37.33$ and $+4.00$ points on \textsc{PushT}, \textsc{Cube}, \textsc{TwoRoom} and \textsc{Reacher} respectively, while running $2.2\text{--}10.9\times$ faster. Across all goal distances, it matches or beats every hierarchical baseline. \methodpldm does the same everywhere except \textsc{PushT} at $d=50$. Only the hierarchical baselines' cheapest, less successful configurations ever run faster than \method's cheapest, most visibly at shorter horizons.

\keypoint{Against flat planning.} Both \methodlewm and \methodpldm beat flat planning on the same base world model in all cases, and the margin grows with $d$. For instance, at $d=75$, \methodlewm adds $+28.67$--$46.67$ over LeWM, while at $d=25$, the gain is small except on \textsc{Cube}. This is expected as the graph pays off when the task is long, which is where a hierarchy is supposed to help.

\keypoint{Overall.} In the target long horizon ($d=75$) regime, {\method}s are substantially faster and stronger than competitors, reaching 100\% success on long horizon \textsc{TwoRoom} and \textsc{Reacher}. 

\keypoint{Offline Costs.} 
Table~\ref{tab:offline} shows the offline 'training' cost between \method and macro planner training. \method's graph costs $9$--$39$ minutes on one GPU and takes no gradient steps while building a macro planner costs $13$--$56\times$ that in GPU-time. Note that these training costs serve all three hierarchical baselines per benchmark. 

\begin{table}[t]
  \caption{One-time offline cost per benchmark. \method's ``encode'' is one
  forward pass of the frozen encoder over every recorded frame; ``index'' is
  the exact nearest-neighbour table plus the edge-cost computation. Hi-LeWM-C's training
  figures are wall clock at the original authors'
  recorded recipe; GPU-hours are wall clock $\times$ devices.}
  \label{tab:offline}
  \centering
   \begin{tabular}{lccc ccc}  
    \toprule
    & \multicolumn{3}{c}{\method preprocessing (1 GPU)} & \multicolumn{3}{c}{Hi-LeWM-C training (4 GPUs)} \\
    \cmidrule(lr){2-4}\cmidrule(lr){5-7}
    benchmark & encode & index & \textbf{total} & grad.\ steps & wall & \textbf{GPU-hours} \\
    \midrule
    \textsc{PushT}   & $17.1$ & $6.0$ & $\mathbf{23.1}$\,min & $\phantom{0}76{,}215$  & $4.01$\,h & $\mathbf{16.0}$ \\
    \textsc{Cube}    & $34.2$ & $4.7$ & $\mathbf{38.9}$\,min & $118{,}125$ & $9.09$\,h & $\mathbf{36.4}$ \\
    \textsc{TwoRoom} & $\phantom{0}7.6$  & $1.4$ & $\mathbf{\phantom{0}9.0}$\,min  & $\phantom{00}8{,}370$   & $0.50$\,h & $\mathbf{\phantom{0}2.0}$  \\
    \textsc{Reacher} & $24.4$ & $4.7$ & $\mathbf{29.1}$\,min & $118{,}125$ & $6.58$\,h & $\mathbf{26.3}$ \\
    \bottomrule
  \end{tabular}
\end{table}

\subsection{Analysis}
\label{sec:results-analysis}

In this section, we show further analyses by picking each benchmark's highest-success $d=75$ result and running several ablations on them. We also experiment with fine-tuning a stronger micro planner and plugging it into \method instead of using the provided frozen micro planners and show that it leads to a boost in performance in Appendix Section~\ref{app:stronger_micro}.

\begin{table}[t]
  \caption{Success (\%) with and without graph re-entry, $d = 75$.}
  \label{tab:reentry}
  \centering
   \begin{tabular}{l cc @{\hskip 1.2em} cc}
    \toprule
    & \multicolumn{2}{c}{\methodlewm} & \multicolumn{2}{c}{\methodpldm} \\
    \cmidrule(lr){2-3}\cmidrule(lr){4-5}
    benchmark & re-entry & w/o re-entry & re-entry & w/o re-entry \\
    \midrule
    \textsc{PushT}   & $\mathbf{62.67 \pm 4.62}$           & $49.33 \pm 3.06$ & $\mathbf{44.00 \pm 5.29}$           & $39.33 \pm 6.43$ \\
    \textsc{Cube}    & $\mathbf{92.67 \pm 3.06}$  & $84.00 \pm 0.00$ & $\mathbf{89.33 \pm 5.03}$  & $88.67 \pm 2.31$ \\
    \textsc{TwoRoom} & $\mathbf{100.00 \pm 0.00}$ & $76.67 \pm 4.16$ & $\mathbf{100.00 \pm 0.00}$ & $61.33 \pm 5.77$ \\
    \textsc{Reacher} & $\mathbf{100.00 \pm 0.00}$ & $99.33 \pm 1.15$ & $\mathbf{100.00 \pm 0.00}$ & $98.00 \pm 0.00$ \\
    \bottomrule
  \end{tabular}
\end{table}

\keypoint{Impact of re-entry.} \method re-enters the graph at the start of every macro step (Eq.~\ref{eq:entry}), so wherever the micro planner lands is a legal entry point with its own optimal remaining route. The alternative is to commit by solving the route once at the beginning and providing the micro planner sub-goals along that route regardless of where it lands. Table~\ref{tab:reentry} shows a drop in performance for committed routes, which is expected as the committed route has no way to re-cost the error the micro planner leaves behind. \textsc{Reacher} is an exception where re-entry has negligible difference as its environment is collected using a uniform random policy and, in majority of cases, the goal is often reachable in $<= 2$ macro plans, leaving little room for compounding errors.

\begin{table}[t]
  \caption{Comparison with a privileged \textsc{oracle}, where the sub-goals come from the query's episode own trajectory at inference.
  Arrival ratio is the median, over the successful episodes, of
  frames-to-success divided by the expert demonstration's own first-hitting time;
  $<1$ means arriving \emph{sooner than the demonstration}.}
  \label{tab:oracle}
  \centering
    \begin{tabular}{ll cc @{\hskip 1.2em} cc}
    \toprule
    & & \multicolumn{2}{c}{Success (\%)} & \multicolumn{2}{c}{Arrival Ratio} \\
    \cmidrule(lr){3-4}\cmidrule(lr){5-6}
    Approach & Benchmark & \method & \textsc{oracle} & \method & \textsc{oracle} \\
    \midrule
    \methodlewm & \textsc{PushT}   & $62.67 \pm 4.62$           & $\mathbf{64.00 \pm 2.00}$ & $\mathbf{2.71}$          & $2.89$ \\
         & \textsc{Cube}    & $\mathbf{92.67 \pm 3.06}$  & $72.00 \pm 4.00$          & $\mathbf{0.44}$ & $1.32$ \\
         & \textsc{TwoRoom} & $\mathbf{100.00 \pm 0.00}$ & $96.67 \pm 2.31$          & $\mathbf{0.93}$ & $1.65$ \\
         & \textsc{Reacher} & $\mathbf{100.00 \pm 0.00}$ & $93.33 \pm 3.06$          & $\mathbf{0.65}$ & $2.38$ \\
    \midrule
    \methodpldm & \textsc{PushT}   & $44.00 \pm 5.29$           & $\mathbf{72.00 \pm 4.00}$ & $\mathbf{3.08}$          & $3.26$ \\
         & \textsc{Cube}    & $\mathbf{89.33 \pm 5.03}$  & $75.33 \pm 1.15$          & $\mathbf{0.63}$ & $1.82$ \\
         & \textsc{TwoRoom} & $\mathbf{100.00 \pm 0.00}$ & $95.33 \pm 1.15$          & $\mathbf{0.57}$ & $1.10$ \\
         & \textsc{Reacher} & $\mathbf{100.00 \pm 0.00}$ & $96.00 \pm 0.00$          & $\mathbf{0.22}$ & $0.75$ \\
    \bottomrule
  \end{tabular}

\end{table}

\keypoint{Comparison with expert's own path.} To ask whether the route is effective, we replace the graph with a privileged \textsc{oracle}: the query episode's own demonstration trajectory, $m^{\star}$, read at inference, aimed one hop ahead and localised to the state actually read. In other words, the \textsc{oracle} \textit{1)} provides the micro planner with a sub-goal, retrieved from $m^{\star}$, that is one micro plan hop away from its current state, and \textit{2)} finds the nearest state, in $m^{\star}$, to where the planner lands and retrieves the next sub-goal from there. 

Table~\ref{tab:oracle} shows both the success rate and the arrival ratio for both \method and \textsc{oracle}. The arrival ratio is the median of the number of frames that the approach took divided by the number of frames that the expert demonstration took to reach the goal over all successful episodes. Hence, an arrival ratio of $<1$ means the route \method took is shorter than the route the demonstration took in the dataset. In particular, for \textsc{Cube}, \textsc{Reacher}, and \textsc{TwoRoom}, \method reaches the goal sooner than the query's demonstration through stitching multiple demonstrations to form a shorter path. This is largely possible because the demonstrations in this benchmark tend to wander instead of taking the shortest path straight to the goal, \textit{e.g.} in \textsc{Cube}, the robotic arm often lingers or hovers around before picking up the cube. In contrast, demonstrations in \textsc{PushT} are more likely to take the shortest path to push the T block to its goal; both \method and \textsc{oracle} took a longer path.

\begin{table}[t]
  \caption{Success (\%) against corpus size, $d = 75$. $1\%$ is 187
  episodes on \textsc{PushT} and 100 elsewhere.}
  \label{tab:sparsity}
  \centering
    \begin{tabular}{ll cccc}
    \toprule
    Approach & Corpus Size & \textsc{Cube} & \textsc{PushT} & \textsc{Reacher} & \textsc{TwoRoom} \\
    \midrule
    \methodlewm & $100\%$ & $92.67 \pm 3.06$ & $62.67 \pm 4.62$ & $100.00 \pm 0.00$ & $100.00 \pm 0.00$ \\
         & $10\%$  & $90.00 \pm 5.29$ & $46.00 \pm 2.00$ & $98.67 \pm 1.15$  & $100.00 \pm 0.00$ \\
         & $3\%$   & $79.33 \pm 1.15$ & $38.00 \pm 2.00$ & $100.00 \pm 0.00$ & $100.00 \pm 0.00$ \\
         & $1\%$   & $72.00 \pm 2.00$ & $31.33 \pm 4.16$ & $100.00 \pm 0.00$ & $100.00 \pm 0.00$ \\
    \midrule
    \methodpldm & $100\%$ & $89.33 \pm 5.03$ & $44.00 \pm 5.29$ & $100.00 \pm 0.00$ & $100.00 \pm 0.00$ \\
         & $10\%$  & $90.67 \pm 1.15$ & $29.33 \pm 5.03$ & $100.00 \pm 0.00$ & $100.00 \pm 0.00$ \\
         & $3\%$   & $84.67 \pm 1.15$ & $26.67 \pm 5.03$ & $100.00 \pm 0.00$ & $95.33 \pm 1.15$ \\
         & $1\%$   & $81.33 \pm 1.15$ & $10.67 \pm 3.06$ & $100.00 \pm 0.00$ & $95.33 \pm 3.06$ \\
    \bottomrule
  \end{tabular}

\end{table}

\keypoint{Dataset density.} \method's success hinges on the density of the dataset, enabling it to stitch multiple episodes together to form a path to the goal. Since collecting training demonstrations is notoriously expensive and time-consuming in real-world applications, we question the effectiveness of \method under sparser conditions. To this end, we rebuild the graph from $1\%$, $3\%$, and $10\%$ of the recorded episodes and show results of \method in Table~\ref{tab:sparsity}. In particular, results for \textsc{TwoRoom} and \textsc{Reacher} shows that a hundred recorded episodes is sufficient to stitch a route that solves the task every time. Success rates for \textsc{PushT} and \textsc{Cube}, on the other hand, drop as the number of demonstrations drop. Nonetheless, at $1\%$, \methodlewm's performance is still similar to the performance of the next best hierarchical baseline at their shipped solver budget ($T=2d$), which was trained on the whole corpus ($31.33 \pm 4.16$ against $34.00 \pm 9.17$ for \textsc{PushT} and $72.00 \pm 2.00$ against $74.00 \pm 3.46$ for \textsc{Cube}).

\section{Conclusion}
\label{sec:conclusion}

We identify physically unrealisable sub-goals as a critical flaw in hierarchical latent planning and introduce a decoder-free realisability residual to quantify it. \method avoids this problem by framing long-horizon planning as shortest-path search over a dataset graph whose edge costs are measured in frames. This guarantees the physical plausibility of its sub-goals and mitigate micro planner execution errors through dynamic graph re-entry. Across different benchmarks, \method improves long-horizon success and often finds paths shorter than the offline demonstrations.
We note that because \method relies on bridge edges to connect visually similar states across episodes, we anticipate it would struggle in environments with highly diverse layouts; stitching trajectories across fundamentally different scenes becomes structurally infeasible. While extending retrieval-based graph planning to accommodate such topological diversity remains an exciting direction for future work, \method ultimately establishes a highly effective, training-free foundation for robust long-horizon planning.

\newpage
\bibliographystyle{unsrtnat}

\appendix
\newpage
\begin{figure}[t]
  \centering
  \includegraphics[width=\linewidth]{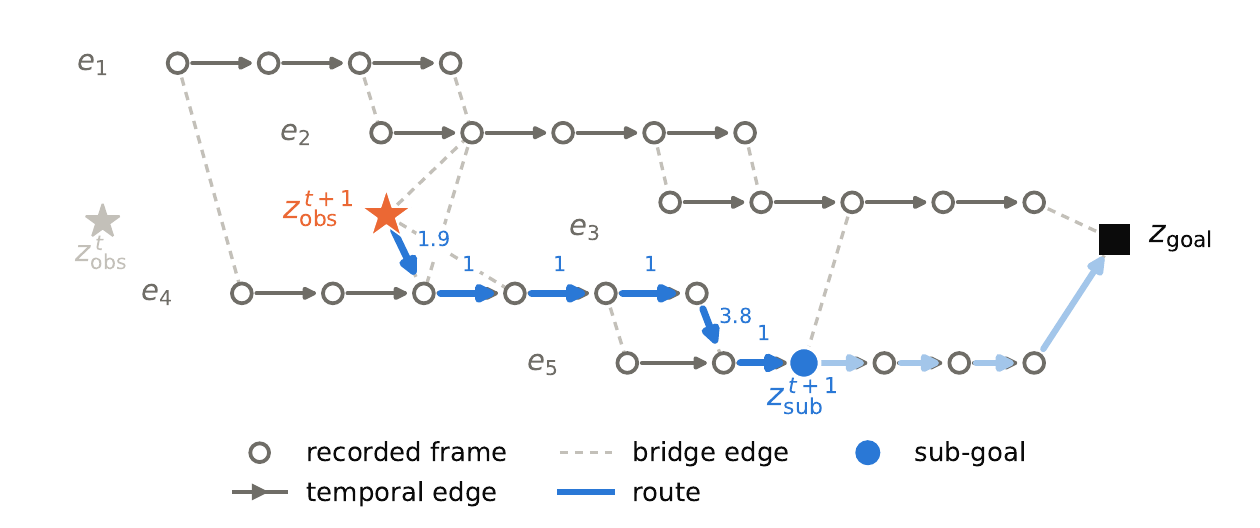}
  \caption{\method's graph re-entry. This follows Fig.~\ref{fig:steps}: after the micro planner stops at $\zobs^{t+1}$, the next macro step re-enters the graph there following the optimal shortest route from $\zobs^{t+1}$ to $\zgoal$.}
  \label{fig:reentry}
\end{figure}

\section{Graph entry}
\label{app:proofs}\label{app:reentry}

In Eq.~\ref{eq:entry}, we evaluate the optimal path from $\zobs$ to $\zgoal$ by dynamically adding $\zobs$ in the graph and applying a one-step Bellman backup over its $k$-nearest neighbours. Figs.~\ref{fig:steps} and \ref{fig:reentry} illustrates this. Below is the proof, for completeness, that this backup yields the exact shortest-path from $\zobs$ to $\zgoal$.

\begin{proposition}[Exactness of the entry backup]
\label{prop:entry}
Let $G^{\star}$ be a weighted directed graph containing a goal vertex $g$, and let $J(v)$ denote the exact shortest-path cost from any vertex $v \in G^{\star}$ to $g$. If a new vertex $\zobs$ is added to the graph with out-edges to a candidate set $\mathcal{N} \subseteq G^{\star}$ and no in-edges, then Eq.~\ref{eq:entry} computes the exact shortest-path cost from $\zobs$ to $g$, and the original costs $J(v)$ remain unchanged.
\end{proposition}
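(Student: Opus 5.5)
The plan has two parts, taken in order. First, adding $\zobs$ does not change the shortest-path costs of the old vertices. Second, the one-step backup of Eq.~\ref{eq:entry} gives the exact cost from $\zobs$. Write $G^{+}$ for the augmented graph, with vertex set $V(G^{\star}) \cup \{\zobs\}$ and edge set $E(G^{\star}) \cup \{\zobs \to u : u \in \mathcal{N}\}$, where each new edge has weight $w(\zobs \to u) = \kappa(\lVert \zobs - z_u \rVert_2)$. Let $J^{+}$ denote shortest-path cost to $g$ in $G^{+}$. Throughout, I use that all edge weights are nonnegative: temporal edges cost $1$ and $\kappa$ is clipped to $[1,H]$. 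This guarantees that shortest paths are well defined, that they may be taken simple, and that Dijkstra's $J$ equals the true shortest-path cost.

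\textbf{Step 1 (invariance).} Fix $v \in G^{\star}$ and any directed walk $v \rightsquigarrow g$ in $G^{+}$. The walk can enter $\zobs$ only along an edge whose head is $\zobs$, and $\zobs$ has no in-edges. It starts at $v \neq \zobs$, so it never visits $\zobs$ and therefore uses only edges of $G^{\star}$. Hence the sets of $v$-to-$g$ paths in $G^{+}$ and in $G^{\star}$ coincide, which gives $J^{+}(v) = J(v)$.

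\textbf{Step 2 (exactness of the backup).} Any path from $\zobs$ to $g$ has a first edge $\zobs \to u$ with $u \in \mathcal{N}$. By Step 1 the path cannot return to $\zobs$, so its remainder is a $u \rightsquigarrow g$ path in $G^{\star}$, with cost at least $J(u)$. This gives the lower bound
\[
J^{+}(\zobs) \ge \min_{u \in \mathcal{N}} \bigl[ w(\zobs \to u) + J(u) \bigr].
\]
For the upper bound, take the minimising $u$ and append to the edge $\zobs \to u$ an optimal $u \rightsquigarrow g$ path from $G^{\star}$. This path is valid in $G^{+}$ and achieves the minimum. If $J(u) = \infty$ for every $u \in \mathcal{N}$, or if $\mathcal{N}$ is empty, both sides equal $\infty$ under the usual convention. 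Identifying $\mathcal{N}$ with the admissible candidate set in Eq.~\ref{eq:entry} (the $k$ nearest neighbours within radius $\varepsilon$) yields $J(\zobs) = J^{+}(\zobs)$, with the argmin being $u^{\star}$.

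\textbf{Expected obstacle.} There is no real technical difficulty here; the main care needed is in bookkeeping. First, the proposition is stated relative to the out-edge set $\mathcal{N}$ actually inserted. The fallback rule that attaches $\zobs$ to the nearest candidate when none lies within $\varepsilon$ is simply a different choice of $\mathcal{N}$, so the claim is unaffected. Second, the claim relies on $J$ on $G^{\star}$ being exact shortest-path costs, which Dijkstra's algorithm guarantees only for nonnegative weights; this is why the clipping in Eq.~\ref{eq:kappa} should be noted explicitly. I would also remark that Step 1 needs no symmetry assumption on bridges: the only structural fact used is that $\zobs$ has no in-edges.
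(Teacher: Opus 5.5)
Your proof is correct and follows the same argument as the paper. Because $\zobs$ has no in-edges, no path from an old vertex can pass through it, so $J$ is unchanged. Every path from $\zobs$ then splits into a first edge $\zobs \to u$ with $u \in \mathcal{N}$, followed by a path inside $G^{\star}$ of cost at least $J(u)$. Your explicit lower/upper-bound split, the empty-$\mathcal{N}$ convention, and the note that nonnegative weights are needed for Dijkstra to return the exact $J$ are extra bookkeeping that the paper leaves implicit.
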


\begin{proof}
Because $\zobs$ has no in-edges, no path originating within $G^{\star}$ can visit it. Therefore, the addition of $\zobs$ cannot create any new shortcuts between existing vertices, leaving $J(v)$ unchanged for all $v \in G^{\star}$. 

Furthermore, any path from $\zobs$ to $g$ must begin with an edge $\zobs \to u$ for some $u \in \mathcal{N}$. Since the path cannot revisit $\zobs$, the remainder of the route must lie entirely within $G^{\star}$. By definition, the minimal cost of this remaining route is $J(u)$. Thus, taking the minimum over all out-edges $u \in \mathcal{N}$ yields the exact shortest-path cost from $\zobs$ to $g$.
\end{proof}

Because Proposition~\ref{prop:entry} holds for \emph{any} candidate set $\mathcal{N}$, our entry rule introduces no approximation error. The one-step backup computes the true mathematical optimum for the graph it is run on; tuning the number of nearest neighbors ($k$) simply determines how densely $\zobs$ is connected to that graph.

\section{Realisability search of macro latent sub-goals}
\label{sec:appendix-realizability}

In Section~\ref{sec:realizability}, we show that latent sub-goals from existing hierarchical works are not realisable. In this Section, we explain the search in detail. The parameterisation of $\mathcal{S}$ is the recorded state itself, i.e. on \textsc{PushT}, it is $5$ dimensions, including the agent's $xy$, block's $xy$, and its angle. We ran the search over the simulator's state space for each sub-goal produced by Hi-LeWM-C (Eq.~\ref{eq:rz}) at two different compute budgets: 1) 3 CEM searches in parallel with 20 iterations of 384 candidates per iteration (23040 renders per sub-goal) and 2) 8 CEM searches with 30 iterations of 384 candidates per iteration (92160 renders per sub-goal). Despite increasing compute budget by fourfold, the median $r(z)$ barely drops $r(z)$: $0.92 \rightarrow 0.89$ for \textsc{Cube} and $2.31 \rightarrow 2.29$ for \textsc{PushT}. The exact $r(z)$ doesn't mean much on its own since the latent space has no absolute scale. However, the fact that it plateaus above $0$ tells us the optimization has failed to find a plausible solution.

\begin{figure}[h]
  \centering
  \includegraphics[width=\linewidth]{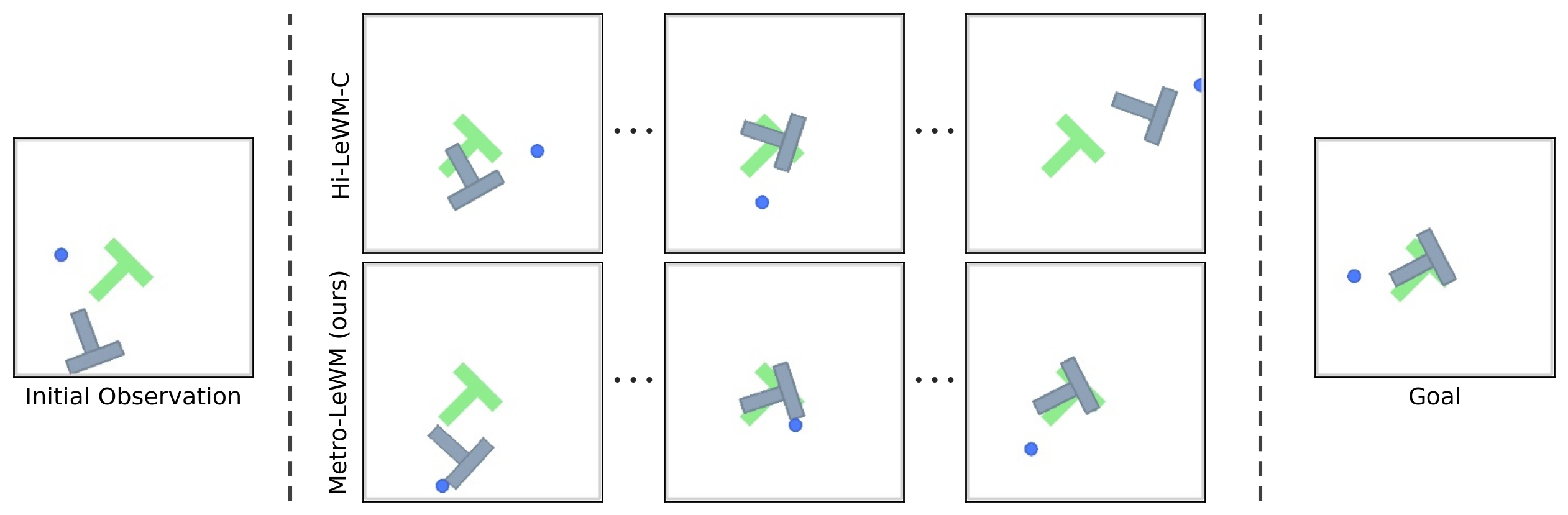}
  \caption{Top row: closest realisable Hi-LeWM-C’s sub-goals from its macro planner found by Eq. 3
and drawn by the environment’s own renderer. Bottom row: our method’s, METRO-WM, sub-goals
which are retrieved real states of other demonstrations in the dataset.}
  \label{fig:realizability-pusht}
\end{figure}

\section{\method's Algorithm}
\label{app:algorithm}

\begin{algorithm}[H]
\caption{\method: sub-goal selection by shortest paths over recorded frames.
Lines 1--2 are offline and shared by every episode; lines 3--4 run once per
episode; the loop runs at each macro replanning step.}
\label{alg:method}
\begin{algorithmic}[1]
\REQUIRE frozen encoder $\phi$; corpus $\Dset$; micro planner $\pilo$; waypoint
         budget $H$; neighbours $k$; pricing quantile $\alpha$
\STATE \textbf{offline, once:} encode $\Dset$; estimate the pricing curve $\Lambda$
        and set $\varepsilon \leftarrow \Lambda(H)$
\STATE \textbf{offline, once:} build $E_{\mathrm{tmp}}$ and $E_{\mathrm{brg}}$
       (Eq.~\ref{eq:bridge}), symmetrizing bridges by elementwise max and
       weighting each by $\kappa$ (Eq.~\ref{eq:kappa})
\STATE \textbf{per episode:} $\zgoal \leftarrow \phi(o_{\mathrm{goal}})$;
       delete the held-out trajectory $m^{\star}$; insert $g$ with $k$ priced
       bridges
\STATE \textbf{per episode:} run Dijkstra's algorithm from $g$
       $\rightarrow$ $J(u)$, $\mathrm{succ}(u)$ at \emph{every} $u$
       (Eq.~\ref{eq:dijkstra})
\WHILE{environment steps remain and goal not reached}
  \STATE $\zobs \leftarrow \phi(o_t)$
        \COMMENT{re-encode the state actually reached}
  \STATE $\mathcal{N} \leftarrow \mathrm{kNN}_{k}(\zobs)$, excluding
         trajectory $m^{\star}$
  \IF{$J(u) = \infty$ for all admissible $u \in \mathcal{N}$}
    \STATE $\zsub \leftarrow$ nearest-to-goal candidate if it improves on
           $\zobs$, else $\zgoal$ \COMMENT{no route}
  \ELSE
    \STATE $u^{\star}, w(\zobs \to u_0), J(\zobs) \leftarrow$ Bellman entry backup over
           $\mathcal{N}$ (Eq.~\ref{eq:entry})
    \IF{$J(\zobs) \le H$}
      \STATE $\zsub \leftarrow \zgoal$
             \COMMENT{goal within one waypoint budget}
    \ELSE
      \STATE $\zsub \leftarrow z_{u_{\ell^{\star}}}$, walking $\mathrm{succ}$
             from $u^{\star}$ on the remaining budget $H - w(\zobs \to u_0)$, taking the
             first hop unconditionally (Eq.~\ref{eq:walk})
    \ENDIF
  \ENDIF
  \STATE run $\pilo(\zobs, \zsub)$
         \COMMENT{run micro planner given sub-goal}
\ENDWHILE
\end{algorithmic}
\end{algorithm}

\section{Latent metric saturation}
\label{app:saturation}

\begin{figure}[t]
  \centering
  \includegraphics[width=\linewidth]{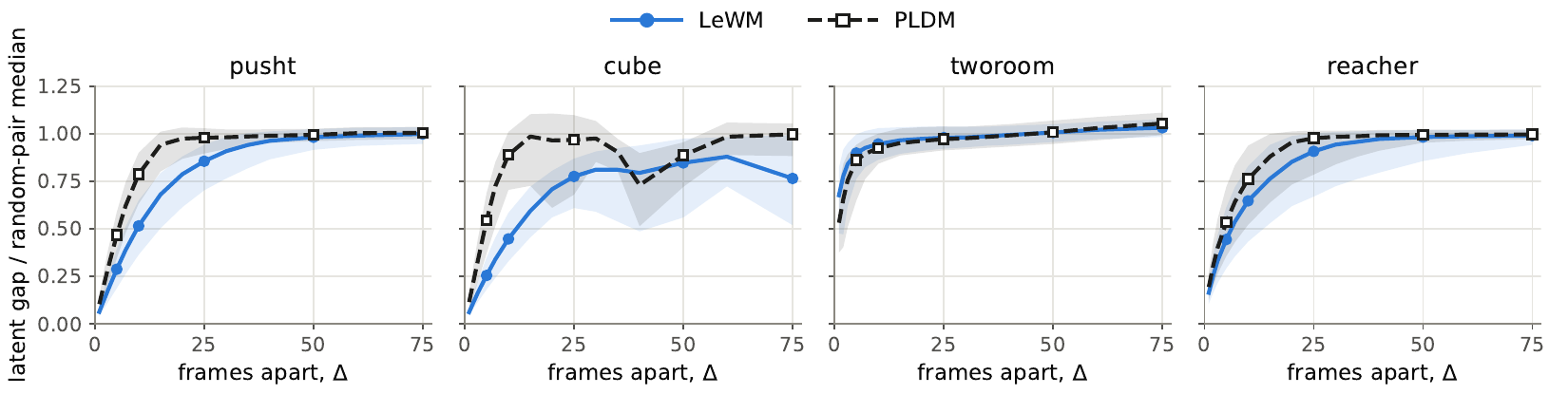}
  \caption{\textbf{Latent distance stops resolving time within a few dozen
  frames.} Median $\lVert z_t - z_{t-d}\rVert$ between two frames of the same
  demonstration $d$ frames apart (band: interquartile range), divided by the
  median distance between frames of two \emph{different}, randomly drawn
  demonstrations. A curve at $1$ cannot tell a state $d$ frames
  away from a random frame of the corpus. }
  \label{fig:saturation}
\end{figure}

Section~\ref{sec:prelim} and \ref{sec:method} assert that the latent metric saturates and that this is a property of the representation rather than of the rollout. We provide evidence for this claim in Fig.~\ref{fig:saturation}, which plots the median, and IQR, latent gap between frames within the same demonstration divided by the latent gap between the same source frame to a random frame from a different demonstration over $5,000$ samples. This shows that the latent distance can only tell frames apart at small gaps, and how small depends on the benchmark and encoder. For instance, for \textsc{PushT}, between $d=5$ and $d=10$, y-axis moves from $0.29 \rightarrow 0.52$, indicating that a bigger gap in frames gives a bigger latent distance and the CEM planner can tell them apart. In contrast, between $d=50$ and $d=75$, y-axis moves from $0.98 \rightarrow 1.0$ - the cost can't say which is closer. This, on top of compounding prediction errors, is why the micro planner fails over longer horizons and why every edge in \method's graph is priced in frames.

\section{Additional Results}

\subsection{What a bridge looks like}

\begin{figure}[H]
  \centering
  \includegraphics[width=\linewidth]{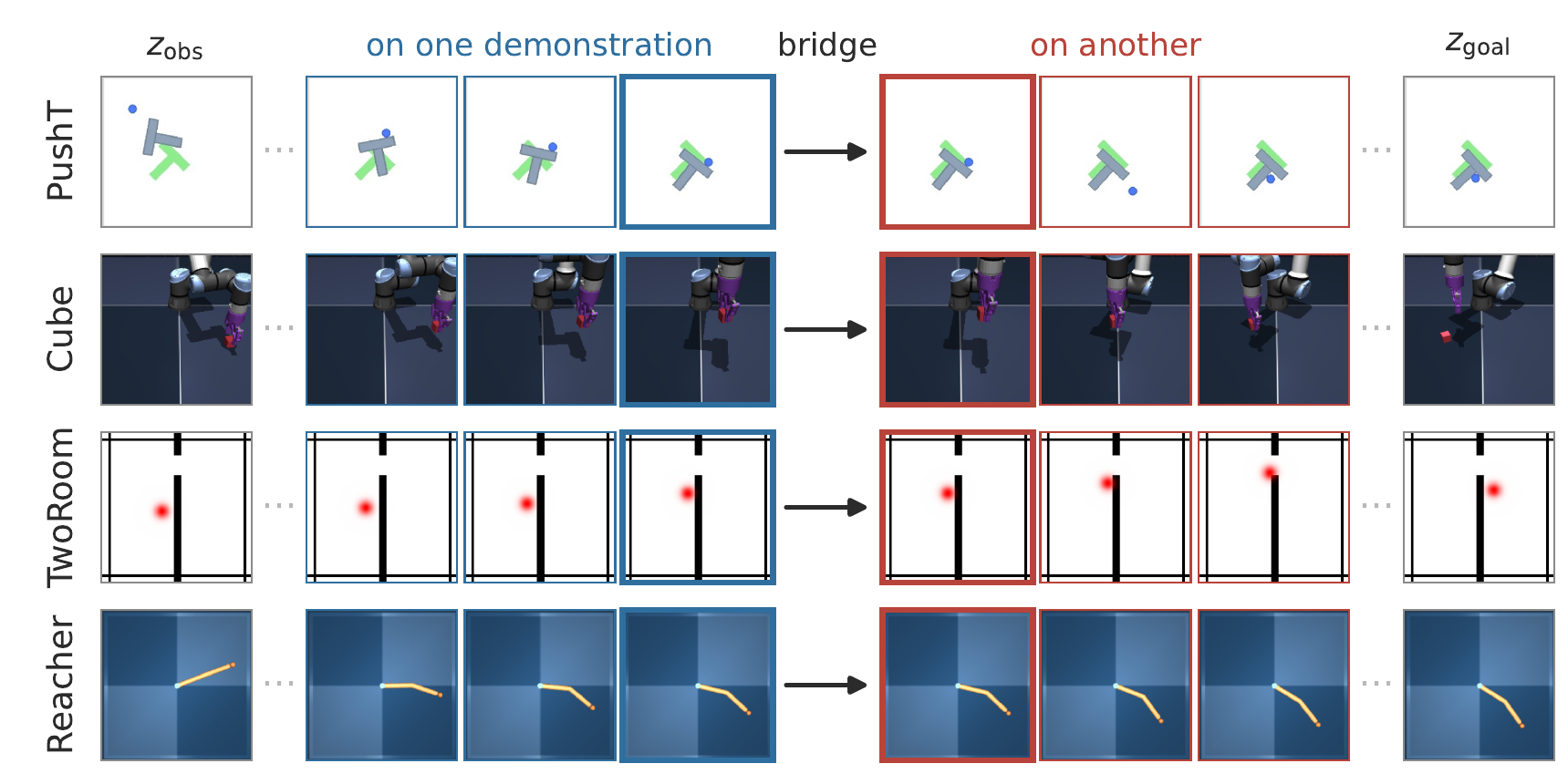}
  \caption{\textbf{A bridge, in pixels, on each benchmark.} Each row is part of
  one route computed by \methodlewm for one query of the evaluation set. The ends are the query's
  $\zobs$ and $\zgoal$ and every panel between them is a recorded frame the route
  traverses, and $\cdots$ marks the stretches not shown. The route runs along
  one demonstration (blue), crosses a bridge, and continues along a different
  one (red). The two heavily outlined frames are that bridge's endpoints, which comes from different demonstrations in the dataset.}
  \label{fig:stitch}
\end{figure}

\begin{table}[tp]
  \caption{We fine-tune \textsc{PushT} LeWM, dubbed LeWM (ft) to get a stronger $d=25$ model and then evaluate it at $d = 75$. $\Delta$ is the gain over performance of the released LeWM weights, on the same $50$ episodes. }
  \label{tab:lowlevel}
  \centering
  \begin{tabular}{lcc}
    \toprule
    Approach & Success (\%) & $\Delta$ \\
    \midrule
    LeWM (ft) & $ 15.33 \pm 3.06 $ & $+1.33$ \\ 
    \methodlewm & $72.00 \pm 3.58$ & $+9.33$ \\
    \textsc{Oracle}  & $81.33 \pm 3.06$ & $+17.33$ \\
    \bottomrule
  \end{tabular}

\end{table}

\subsection{A stronger micro planner} 
\label{app:stronger_micro}

We hypothesise that an improvement to the micro planner converts into long-horizon success when plugged into \method. To this end, we further fine-tune the frozen LeWM, named LeWM (ft), improving its short horizon performance from $93.33 \pm 5.03$ to $97.33 \pm 1.15$ at $d=25$. We then evaluate this model on $d=75$ and show that despite a stronger $d=25$ success rate, $d=75$ flat planning leads to negligible differences in performance (Table~\ref{tab:lowlevel}). Plugging LeWM (ft) into \method and \textsc{oracle}, on the other hand, leads to substantial gains, indicating that macro sub-goal selection is critical to realizing micro-level gains.

\section{Extended related work}
\label{app:related}
Hierarchical planning tackles the compounding errors of long-horizon control by decomposing a task into shorter segments, with sub-goals either \emph{generated} by a trained model or \emph{retrieved} from recorded experience. Hierarchical latent world models generate their sub-goals; retrieval has been developed mainly in goal-conditioned reinforcement learning (RL) and visual navigation. \method brings retrieval to latent world-model planning, so that every sub-goal is, by construction, the encoding of a real state.

\paragraph{Sub-goal generation in latent world models.} Recent hierarchical latent world models generate sub-goals by searching at a coarser time scale~\citep{zhang2026hierarchical,caselli2026mindthegap} or by predicting the next sub-goal embedding directly~\citep{masip2026ffjepa}. Nothing constrains these sub-goals to be encodings of real states. \citet{masip2026ffjepa} attributes some of its failures to predicted sub-goals that, once decoded, do not contain the actor, and \citet{caselli2026mindthegap} finds that hierarchical planning underperforms flat planning unless the search is restricted to macro-actions encoded from training demonstrations. Restricting the search does not restrict its output: Section~\ref{sec:realizability} shows that this variant still emits sub-goals that no state encodes to. Every sub-goal \method emits is, by construction, the encoding of a recorded frame.

\paragraph{Sub-goal generation in pixel space.} Outside JEPA-based world models, hierarchical visual planners generate visual sub-goals. Early approaches build on video prediction: time-agnostic prediction~\citep{jayaraman2019tap} lets a predictor select which intermediate frame to output, yielding predictable ``bottleneck'' frames that serve as sub-goals for visual MPC; HVF~\citep{nair2020hvf} generates sub-goal images and optimises them so that each segment is solvable by visual MPC with an action-conditioned video model; and GCP~\citep{pertsch2020lvd} recursively predicts intermediate observations for coarse-to-fine planning, then executes the plan with a trained inverse model. In a direct-pixel ablation, LEAP~\citep{nasiriany2019leap} finds image sub-goals outside the manifold of valid observations that exploit its value function. More recently, pretrained diffusion models generate sub-goals for learned low-level controllers: SuSIE~\citep{black2024susie} edits the current image into a language-conditioned sub-goal for a goal-conditioned policy, while UniPi~\citep{du2023unipi} and VLP~\citep{du2024vlp} generate video plans followed by an inverse-dynamics model or a goal-conditioned policy. For these generated visual sub-goals, realisability rests on the generative prior, and visual realism does not imply physical plausibility~\citep{motamed2026physicsiq}. \method instead retrieves its sub-goals from recorded frames.

\paragraph{Retrieval-based sub-goals and graph planning.} Sub-goal retrieval is mostly studied in reinforcement learning (RL) settings. To retrieve sub-goals, graph-based methods build a graph over stored states and plan on it with shortest paths, both in visual navigation~\citep{savinov2018sptm,shah2021ving} and in goal-conditioned reinforcement learning (RL)~\citep{eysenbach2019sorb,emmons2020sgm,huang2019landmarks,zhang2021l3p,baek2025gas}. When nodes are recorded states, every sub-goal is realisable; methods whose nodes are cluster centroids or generated images~\citep{zhang2021l3p,baek2025gas,liu2020htm} give up this guarantee. Most works read them off a goal-conditioned value or Q-function~\citep{eysenbach2019sorb,emmons2020sgm,huang2019landmarks,zhang2021l3p,kim2021higl,lee2022dhrl}; others train a distance, as a temporal-proximity classifier or contrastive score~\citep{savinov2018sptm,yang2020plan2vec,liu2020htm} or as a learned temporal distance~\citep{shah2021ving,baek2025gas}. VINN~\citep{pari2022vinn} likewise retrieves from demonstrations in a pretrained
embedding space, but retrieves actions for imitation rather than sub-goals for planning. Closest to our setting, TTGS~\citep{opryshko2026ttgs} adds no training of its own, but its edge weights come from the value function of an already-trained goal-conditioned agent.

\section{Detailed experimental setup}
\label{sec:appendix-setup}

For reproducibility and completeness, we provide all details of our experiments. We will also release the code upon internal approval.
All benchmarks and frozen micro planners used are directly downloaded from https://github.com/lucas-maes/le-wm.

\subsection{Benchmarks}
\label{sec:appendix-benchmarks}

\begin{table}[h]
  \caption{Total number of demonstrations and frames for each benchmark in our experiments}
  \label{tab:benchmarks}
  \centering
  \begin{tabular}{lrr}
    \toprule
    benchmark & 
    no. of episodes/demonstrations & no. of frames  \\
    \midrule
    \textsc{PushT}   & $18{,}685$ & $2{,}261{,}996$ \\
    \textsc{Cube}    & $10{,}000$ & $1{,}970{,}000$ \\
    \textsc{TwoRoom} & $10{,}000$ & $880{,}809$     \\
    \textsc{Reacher} & $10{,}000$ & $1{,}970{,}000$ \\
    \bottomrule
  \end{tabular}
\end{table}

\paragraph{\textsc{PushT}.} A 2D task in which a circular agent pushes a
T-shaped block onto a target pose. The recorded state is 7-dimensional (agent
position, block position, block angle, agent velocity). 

\paragraph{\textsc{Cube} (OGBench, \texttt{cube\_single\_expert}).} A 7-DoF arm
with a parallel gripper moving a cube to a target pose, simulated in
MuJoCo~\citep{todorov2012mujoco}. Its realizable state space is 11-dimensional
(six arm joints, the gripper driver, cube position and cube yaw).

\paragraph{\textsc{TwoRoom}.} A navigation task through a doorway between two
rooms. Its episodes are short (92 frames), which matters only for the
hierarchical baselines' training cost: at a 90-frame training window it yields
$14\times$ fewer training samples than \textsc{Cube}.

\paragraph{\textsc{Reacher} (dm\_control).}
Joint-space reaching to a target configuration. The released corpus is generated by a \textbf{random policy}, not
an expert, so no recorded episode reaches the goal on its own. 

\subsection{Micro planners}
\label{sec:appendix-model}

We use two base world models, LeWorldModel (LeWM)~\citep{maes2026lewm} and PLDM~\citep{sobal2025pldm}, as our frozen micro planners. Both use the LeWM authors' released weights on all four benchmarks. They are based on the same architecture: the encoder is a
ViT-\emph{tiny} (12 layers, hidden width 192, $5.5$M parameters) trained from scratch at $224 \times 224$ input resolution with patch size 14, Its CLS token passes through a projector MLP ($192 \to 2048 \to 192$, with batch normalization (BN) and GELU), giving a single \textbf{192-dimensional} vector (state) per frame. The predictor is an autoregressive transformer ($10.8$M parameters) over these states and has $6$ layers, $16$ self-attention heads, $192$ width, $2048$ feed-forward width, and a positional embedding. The action is conditioned through adaptive layer normalization.

Both planners also use the same CEM hyperparameters: $300$ samples, top-$k$ 30, variance scale $1.0$, action block of $5 frames$. The number of CEM iterations is $30$ on \textsc{PushT} and $10$ on the other benchmarks. Each plan solves for $h$ action blocks and executes them all open-loop before solving again till the budget $T$ is reached. We sweep $h$ only as part of \method's $(h, H)$ search and never change any other solver setting.

\subsection{Hierarchical baselines}
\label{sec:appendix-hier}

The three hierarchical baselines we compare against are the macro planners of~\citet{caselli2026mindthegap}, and all experiments are done using the authors' own code. All three share one architecture: the frozen LeWM micro planner, plus a trained \emph{macro} predictor that maps a latent state and a 32-dimensional latent macro-action to the latent state one macro step later. A macro step is $25$ frames (5 action blocks of 5 frames). The macro planner optimizes a sequence of $h_h$ macro-actions using CEM, optimizing the last predicted latent to $\zgoal$. The first predicted latent, after the first macro step, becomes the sub-goal, which the micro planner pursues by optimizing its local trajectory via its own independent CEM loop. They differ only in how the macro level proposes macro-actions and how often the macro planner replans:
\begin{itemize}
  \item \textbf{Hi-LeWM} runs CEM directly over the latent macro-actions,
    within a box calibrated from 2{,}048 encoded expert action chunks, and
    warm-starts each solve from the previous solution.
  \item \textbf{Hi-LeWM-C} samples instead from a bank of encoded expert
    action sequences: each candidate is a bank exemplar plus a Gaussian
    residual, one candidate is left unperturbed, the residual is refitted to
    the elites with a floor of $10^{-3}$, and the executed candidate is the
    best one found rather than the elite mean. We use the bank the authors
    specified to us, 65{,}536 sequences with residual scale $0.2$. The bank is not
    leave-one-out, unlike \method's graph, so CEM may draw from the very
    demonstration it is evaluated on.
  \item \textbf{Hi-LeWM-C (staged)} uses the same bank solver but solves
    \emph{once}, at $t = 0$. Its $h_h = 2$ predicted latents become fixed
    stage targets, each held for $\lceil d / 2 \rceil$ steps.
\end{itemize}
Hi-LeWM and Hi-LeWM-C replan the macro level every $5$ frames for all benchmarks, apart for \textsc{PushT} where it is replan every $2$ frames at $d = 75$, following the authors' reported setting.

\paragraph{Hyperparameters.} We use the authors' best reported solver budgets, which scale with $d$, in Table.~\ref{tab:hier-cem}.

\begin{table}[h]
  \caption{CEM hyperparameters of the hierarchical baselines, variance scale $1.0$ at
  both levels. They come from the authors' released matrix, which covers
  \textsc{PushT} and \textsc{Cube}; we apply the same rows on
  \textsc{TwoRoom} and \textsc{Reacher}.}
  \label{tab:hier-cem}
  \centering
  \begin{tabular}{lcc}
    \toprule
    $d$ & macro: samples $\times$ iterations, top-$k$ & micro: samples $\times$ iterations, top-$k$ \\
    \midrule
    25 & $900 \times 20$, $10$  & $300 \times 30$, $150$ \\
    50 & $1500 \times 40$, $10$ & $900 \times 30$, $150$ \\
    75 & $1200 \times 60$, $10$ & $1200 \times 30$, $150$ \\
    \bottomrule
  \end{tabular}

\end{table}

These baselines therefore do \emph{not} share the micro solver settings of Section~\ref{sec:appendix-model}; only the checkpoint is shared. Their micro planner plans $2$ action blocks ($10$ frames), executes the first block and replans, with $30$ iterations on every benchmark and more samples as $d$ grows. For macro horizon $h_h$, following the authors, which sweeps $h_h \in \{1, 2, 3\}$ on \textsc{PushT} and \textsc{Cube}, we sweep $h_h \in \{1, 2\}$ and report the performance that is better of the two for each benchmark and budget. The staged arm is fixed at $h_h = 2$ because it needs two stage targets. 

\paragraph{Training on \textsc{TwoRoom} and \textsc{Reacher}.} The authors release trained macro models for \textsc{PushT} and \textsc{Cube}, which we use. For \textsc{TwoRoom} and \textsc{Reacher} we trained our own using the authors' model, loss and data code. Their released training entry point is not functional, so we wrote the missing training pipeline. We verified that pipeline by rebuilding the \textsc{Cube} model from the authors' dumped configuration: its 425 tensors and its trainable set match the released checkpoint exactly. We also normalize pixels to ImageNet statistics, which their encoder path omits and the frozen LeWM encoder
expects. Hyperparameters for the training are read off directly from their paper:
\begin{itemize}
  \item Trained parts: only the macro planner, $12.5$\,M of $30.5$\,M
    parameters; the micro LeWM encoder, predictor, action encoder and projectors
    stay frozen.
  \item Loss: mean squared error between the predicted and actual
    latent of the next waypoint.
  \item Data: windows of $18$ rows at frame-skip $5$ (a 3-row history
    plus a span of up to 15), with 5 waypoints per window sampled at sorted
    random positions.
  \item Optimization: AdamW, learning rate $5 \times 10^{-5}$, weight
    decay $10^{-3}$, global batch $128$, bf16, gradient clipping at $1.0$, and
    a linear warm-up over the first $1\%$ of steps followed by cosine decay to
    zero, for $15$ epochs.
\end{itemize}

Following \textsc{PushT} and \textsc{Cube}, we evaluate the epoch-$15$ checkpoints. Training cost is in
Table.\ref{tab:offline}.

\subsection{\method's hyperparameters}
\label{sec:appendix-hsweep}

Unlike hierarchical baselines, where the macro planner CEM hyperparameters are tuned together with the micro planner CEM hyperparameters on top of macro-specific hyperparameters, \textit{e.g.} interval in which macro planner replans, \method{} has only 2 hyperparameters: the waypoint budget $H$, in frames, and the micro planner's plan length $h$, in action blocks. Every other micro planner CEM hyperparameter follows what the micro planner was shipped with (Section~\ref{sec:appendix-model}) and they stay fixed. $H$ is quantized in steps of the action block $a = 5$, and we fixed the macro planner interval to $2h a$, one micro replan per sub-goal, throughout. We also tried tuning the number of edges $k$ in the graph but got similar robust performance so we fixed $k=4$ for efficiency in all experiments.

On \textsc{PushT} we sweep $h \in \{1, 3, 5\}$ against $H \in [1, 25]$, $[10, 40]$ and $[20, 40]$ respectively, totaling $471$ evaluation runs on 3 different seeds $\{7,8,9\}$. On the other three benchmarks we sweep $h \in \{1, \dots, 5\}$ against $H \in [ha - 5,\, ha + 5]$, clamped at $1$, so $H \le 30$. As success rate is latching, for each $(h, H)$ cell, we run once at max budget $T=4d$ for all benchmarks and the success rate at any budget $<= T$ can be read from it. As each cell is cheap and doesn't require training, the sweep across all benchmarks took under a day on 8 NVIDIA A100 80GB GPUs for each micro planner.

To encourage reproducibility without rerunning the hyperparameter sweep, we provide the cells found, along with their budget $(h,H,T)$, for each \method point in the frontier shown in Figure~\ref{fig:pareto}: 

For \methodlewm:
\begin{itemize}
\item \textsc{PushT}: at $d{=}25$, $(3, 40, 30)$, $(5, 40, 50)$, $(3, 40, 60)$,
$(5, 34, 100)$; at $d{=}50$, $(3, 29, 60)$, $(3, 25, 90)$,
$(3, 20, 120)$, $(5, 27, 200)$; at $d{=}75$, $(3, 30, 90)$,
$(3, 24, 150)$, $(3, 24, 180)$, $(5, 28, 300)$.

\item \textsc{Cube}: at $d{=}25$, $(1, 10, 30)$, $(1, 8, 50)$, $(1, 8, 60)$,
$(1, 6, 100)$; at $d{=}50$, $(1, 10, 50)$, $(1, 10, 60)$, $(1, 8, 100)$,
$(1, 7, 130)$, $(1, 7, 200)$; at $d{=}75$, $(1, 7, T)$ for
$T \in \{90, 150, 200, 300\}$.

\item \textsc{TwoRoom}: at $d{=}25$, $(1, 8, T)$ for $T \in \{30, 50, 60\}$;
at $d{=}50$, $(1, 7, T)$ for $T \in \{50, 60, 100\}$; at $d{=}75$,
$(1, 8, 90)$, $(1, 7, 150)$.

\item \textsc{Reacher}: a single point per distance, $(1, 10, 30)$, $(1, 10, 50)$
and $(1, 10, 90)$ at $d{=}25, 50, 75$, each already at $100\%$.
\end{itemize}

For \methodpldm: 
\begin{itemize}
\item \textsc{PushT}: at $\Delta{=}25$, $(3, 15, T)$ for $T \in \{60, 90\}$; at
$\Delta{=}50$, $(3, 19, T)$ for $T \in \{90, 120, 180\}$; at $\Delta{=}75$,
$(3, 17, 150)$, $(3, 13, 180)$, $(3, 13, 300)$.

\item \textsc{Cube}: at $\Delta{=}25$, $(1, 9, 30)$, $(1, 8, 50)$, $(1, 8, 100)$;
at $\Delta{=}50$, $(1, 9, 50)$, $(1, 9, 60)$, $(1, 6, 100)$, $(1, 7, 130)$,
$(1, 5, 200)$; at $\Delta{=}75$, $(1, 7, 90)$, $(1, 7, 150)$, $(1, 5, 200)$.

\item \textsc{TwoRoom}: at $\Delta{=}25$, $(1, 9, 30)$, $(1, 8, 50)$; at
$\Delta{=}50$, $(1, 8, T)$ for $T \in \{50, 60\}$; at $\Delta{=}75$,
$(1, 8, 90)$, $(1, 9, 150)$.

\item \textsc{Reacher}: $(1, 10, 30)$ and $(1, 10, 50)$ at $\Delta{=}25$, then a
single point, $(1, 10, 50)$ and $(1, 10, 90)$, at $\Delta{=}50$ and $75$.
\end{itemize}

\end{document}

%% file: math.tex
\usepackage{amsmath,amsfonts,bm}

\def\eqref#1{equation~\ref{#1}}

\def\1{\bm{1}}

\DeclareMathAlphabet{\mathsfit}{\encodingdefault}{\sfdefault}{m}{sl}
\SetMathAlphabet{\mathsfit}{bold}{\encodingdefault}{\sfdefault}{bx}{n}

\DeclareMathOperator*{\argmin}{arg\,min}

%% file: arxiv_main.bbl
\begin{thebibliography}{39}
\providecommand{\natexlab}[1]{#1}
\providecommand{\url}[1]{\texttt{#1}}
\expandafter\ifx\csname urlstyle\endcsname\relax
  \providecommand{\doi}[1]{doi: #1}\else
  \providecommand{\doi}{doi: \begingroup \urlstyle{rm}\Url}\fi

\bibitem[Assran et~al.(2023)Assran, Duval, Misra, Bojanowski, Vincent, Rabbat, LeCun, and Ballas]{assran2023ijepa}
Mahmoud Assran, Quentin Duval, Ishan Misra, Piotr Bojanowski, Pascal Vincent, Michael Rabbat, Yann LeCun, and Nicolas Ballas.
\newblock Self-supervised learning from images with a joint-embedding predictive architecture.
\newblock In \emph{IEEE / CVF Computer Vision and Pattern Recognition Conference (CVPR)}, 2023.
\newblock \doi{10.1109/CVPR52729.2023.01499}.

\bibitem[Baek et~al.(2025)Baek, Park, Park, Oh, and Kim]{baek2025gas}
Seungho Baek, Taegeon Park, Jongchan Park, Seungjun Oh, and Yusung Kim.
\newblock Graph-assisted stitching for offline hierarchical reinforcement learning.
\newblock In \emph{International Conference on Machine Learning (ICML)}, 2025.

\bibitem[Bardes et~al.(2024)Bardes, Garrido, Ponce, Chen, Rabbat, LeCun, Assran, and Ballas]{bardes2024vjepa}
Adrien Bardes, Quentin Garrido, Jean Ponce, Xinlei Chen, Michael Rabbat, Yann LeCun, Mido Assran, and Nicolas Ballas.
\newblock Revisiting feature prediction for learning visual representations from video.
\newblock \emph{Transactions on Machine Learning Research (TMLR)}, 2024.

\bibitem[Black et~al.(2024)Black, Nakamoto, Atreya, Walke, Finn, Kumar, and Levine]{black2024susie}
Kevin Black, Mitsuhiko Nakamoto, Pranav Atreya, Homer Walke, Chelsea Finn, Aviral Kumar, and Sergey Levine.
\newblock Zero-shot robotic manipulation with pretrained image-editing diffusion models.
\newblock In \emph{International Conference on Learning Representations (ICLR)}, 2024.

\bibitem[Caselli et~al.(2026)Caselli, Massafra, Punzo, Lo~Sardo, Pantelidis, and Bhethanabhotla]{caselli2026mindthegap}
Niccol{\`o} Caselli, Francesco Massafra, Samuele Punzo, Salvatore Lo~Sardo, Ippokratis Pantelidis, and Sathya~Kamesh Bhethanabhotla.
\newblock Mind the gap: Promises and pitfalls of hierarchical planning in {LeWorldModel}.
\newblock \emph{arXiv preprint arXiv:2607.12547}, 2026.

\bibitem[Chi et~al.(2024)Chi, Xu, Feng, Cousineau, Du, Burchfiel, Tedrake, and Song]{chi2023diffusionpolicy}
Cheng Chi, Zhenjia Xu, Siyuan Feng, Eric Cousineau, Yilun Du, Benjamin Burchfiel, Russ Tedrake, and Shuran Song.
\newblock Diffusion policy: Visuomotor policy learning via action diffusion.
\newblock \emph{International Journal of Robotics Research}, 2024.
\newblock \doi{10.1177/02783649241273668}.

\bibitem[Dijkstra(1959)]{dijkstra1959}
Edsger~W. Dijkstra.
\newblock A note on two problems in connexion with graphs.
\newblock \emph{Numerische Mathematik}, 1959.
\newblock \doi{10.1007/BF01386390}.

\bibitem[Du et~al.(2023)Du, Yang, Dai, Dai, Nachum, Tenenbaum, Schuurmans, and Abbeel]{du2023unipi}
Yilun Du, Sherry Yang, Bo~Dai, Hanjun Dai, Ofir Nachum, Josh Tenenbaum, Dale Schuurmans, and Pieter Abbeel.
\newblock Learning universal policies via text-guided video generation.
\newblock In \emph{Advances in Neural Information Processing Systems (NeurIPS)}, 2023.

\bibitem[Du et~al.(2024)Du, Yang, Florence, Xia, Wahid, Ichter, Sermanet, Yu, Abbeel, Tenenbaum, Kaelbling, Zeng, and Tompson]{du2024vlp}
Yilun Du, Sherry Yang, Pete Florence, Fei Xia, Ayzaan Wahid, Brian Ichter, Pierre Sermanet, Tianhe Yu, Pieter Abbeel, Joshua~B. Tenenbaum, Leslie~Pack Kaelbling, Andy Zeng, and Jonathan Tompson.
\newblock Video language planning.
\newblock In \emph{International Conference on Learning Representations (ICLR)}, 2024.

\bibitem[Emmons et~al.(2020)Emmons, Jain, Laskin, Kurutach, Abbeel, and Pathak]{emmons2020sgm}
Scott Emmons, Ajay Jain, Misha Laskin, Thanard Kurutach, Pieter Abbeel, and Deepak Pathak.
\newblock Sparse graphical memory for robust planning.
\newblock In \emph{Advances in Neural Information Processing Systems (NeurIPS)}, 2020.

\bibitem[Eysenbach et~al.(2019)Eysenbach, Salakhutdinov, and Levine]{eysenbach2019sorb}
Ben Eysenbach, Russ~R Salakhutdinov, and Sergey Levine.
\newblock Search on the replay buffer: Bridging planning and reinforcement learning.
\newblock In \emph{Advances in Neural Information Processing Systems (NeurIPS)}, 2019.

\bibitem[Florence et~al.(2021)Florence, Lynch, Zeng, Ramirez, Wahid, Downs, Wong, Lee, Mordatch, and Tompson]{florence2021ibc}
Pete Florence, Corey Lynch, Andy Zeng, Oscar~A Ramirez, Ayzaan Wahid, Laura Downs, Adrian Wong, Johnny Lee, Igor Mordatch, and Jonathan Tompson.
\newblock Implicit behavioral cloning.
\newblock In \emph{CoRL}, 2021.

\bibitem[Huang et~al.(2019)Huang, Liu, and Su]{huang2019landmarks}
Zhiao Huang, Fangchen Liu, and Hao Su.
\newblock Mapping state space using landmarks for universal goal reaching.
\newblock In \emph{Advances in Neural Information Processing Systems (NeurIPS)}, 2019.

\bibitem[Jayaraman et~al.(2019)Jayaraman, Ebert, Efros, and Levine]{jayaraman2019tap}
Dinesh Jayaraman, Frederik Ebert, Alexei~A. Efros, and Sergey Levine.
\newblock Time-agnostic prediction: Predicting predictable video frames.
\newblock In \emph{International Conference on Learning Representations (ICLR)}, 2019.

\bibitem[Kim et~al.(2021)Kim, Seo, and Shin]{kim2021higl}
Junsu Kim, Younggyo Seo, and Jinwoo Shin.
\newblock Landmark-guided subgoal generation in hierarchical reinforcement learning.
\newblock In \emph{Advances in Neural Information Processing Systems (NeurIPS)}, 2021.

\bibitem[Kurutach et~al.(2018)Kurutach, Tamar, Yang, Russell, and Abbeel]{kurutach2018causalinfogan}
Thanard Kurutach, Aviv Tamar, Ge~Yang, Stuart Russell, and Pieter Abbeel.
\newblock Learning plannable representations with causal {InfoGAN}.
\newblock In \emph{Advances in Neural Information Processing Systems (NeurIPS)}, 2018.

\bibitem[LeCun(2022)]{lecun2022path}
Yann LeCun.
\newblock A path towards autonomous machine intelligence.
\newblock OpenReview, 2022.

\bibitem[Lee et~al.(2022)Lee, Kim, Jang, and Kim]{lee2022dhrl}
Seungjae Lee, Jigang Kim, Inkyu Jang, and H.~Jin Kim.
\newblock {DHRL}: A graph-based approach for long-horizon and sparse hierarchical reinforcement learning.
\newblock In \emph{Advances in Neural Information Processing Systems (NeurIPS)}, 2022.
\newblock \doi{10.52202/068431-0993}.

\bibitem[Liu et~al.(2020)Liu, Kurutach, Tung, Abbeel, and Tamar]{liu2020htm}
Kara Liu, Thanard Kurutach, Christine Tung, Pieter Abbeel, and Aviv Tamar.
\newblock Hallucinative topological memory for zero-shot visual planning.
\newblock In \emph{International Conference on Machine Learning (ICML)}, 2020.

\bibitem[Maes et~al.(2026)Maes, Le~Lidec, Scieur, LeCun, and Balestriero]{maes2026lewm}
Lucas Maes, Quentin Le~Lidec, Damien Scieur, Yann LeCun, and Randall Balestriero.
\newblock {LeWorldModel}: Stable end-to-end joint-embedding predictive architecture from pixels.
\newblock \emph{arXiv preprint arXiv:2603.19312}, 2026.

\bibitem[Masip et~al.(2026)Masip, Swinnen, Hu, Detry, and Tuytelaars]{masip2026ffjepa}
Sergi Masip, Jonathan Swinnen, Yutong Hu, Renaud Detry, and Tinne Tuytelaars.
\newblock {FF-JEPA}: Long-horizon planning in world models with latent planners.
\newblock \emph{arXiv preprint arXiv:2606.09311}, 2026.

\bibitem[Motamed et~al.(2026)Motamed, Culp, Swersky, Jaini, and Geirhos]{motamed2026physicsiq}
Saman Motamed, Laura Culp, Kevin Swersky, Priyank Jaini, and Robert Geirhos.
\newblock Do generative video models understand physical principles?
\newblock In \emph{IEEE/CVF Winter Conference on Applications of Computer Vision (WACV)}, 2026.

\bibitem[Nair et~al.(2018)Nair, Pong, Dalal, Bahl, Lin, and Levine]{nair2018rig}
Ashvin Nair, Vitchyr Pong, Murtaza Dalal, Shikhar Bahl, Steven Lin, and Sergey Levine.
\newblock Visual reinforcement learning with imagined goals.
\newblock In \emph{Advances in Neural Information Processing Systems (NeurIPS)}, 2018.

\bibitem[Nair \& Finn(2020)Nair and Finn]{nair2020hvf}
Suraj Nair and Chelsea Finn.
\newblock Hierarchical foresight: Self-supervised learning of long-horizon tasks via visual subgoal generation.
\newblock In \emph{International Conference on Learning Representations (ICLR)}, 2020.

\bibitem[Nasiriany et~al.(2019)Nasiriany, Pong, Lin, and Levine]{nasiriany2019leap}
Soroush Nasiriany, Vitchyr~H. Pong, Steven Lin, and Sergey Levine.
\newblock Planning with goal-conditioned policies.
\newblock In \emph{Advances in Neural Information Processing Systems (NeurIPS)}, 2019.

\bibitem[Opryshko et~al.(2026)Opryshko, Quan, Voelcker, Du, and Gilitschenski]{opryshko2026ttgs}
Evgenii Opryshko, Junwei Quan, Claas Voelcker, Yilun Du, and Igor Gilitschenski.
\newblock Test-time graph search for goal-conditioned reinforcement learning.
\newblock In \emph{International Conference on Machine Learning (ICML)}, 2026.

\bibitem[Pari et~al.(2022)Pari, Shafiullah, Arunachalam, and Pinto]{pari2022vinn}
Jyothish Pari, Nur Muhammad~(Mahi) Shafiullah, Sridhar~Pandian Arunachalam, and Lerrel Pinto.
\newblock The surprising effectiveness of representation learning for visual imitation.
\newblock In \emph{Proceedings of Robotics: Science and Systems (RSS)}, 2022.
\newblock \doi{10.15607/RSS.2022.XVIII.010}.

\bibitem[Park et~al.(2025)Park, Frans, Eysenbach, and Levine]{park2025ogbench}
Seohong Park, Kevin Frans, Benjamin Eysenbach, and Sergey Levine.
\newblock {OGBench}: Benchmarking offline goal-conditioned {RL}.
\newblock In \emph{International Conference on Learning Representations (ICLR)}, 2025.

\bibitem[Pertsch et~al.(2020)Pertsch, Rybkin, Ebert, Zhou, Jayaraman, Finn, and Levine]{pertsch2020lvd}
Karl Pertsch, Oleh Rybkin, Frederik Ebert, Shenghao Zhou, Dinesh Jayaraman, Chelsea Finn, and Sergey Levine.
\newblock Long-horizon visual planning with goal-conditioned hierarchical predictors.
\newblock In \emph{Advances in Neural Information Processing Systems (NeurIPS)}, 2020.

\bibitem[Rubinstein(1999)]{rubinstein1999cem}
Reuven Rubinstein.
\newblock The cross-entropy method for combinatorial and continuous optimization.
\newblock \emph{Methodology And Computing In Applied Probability}, 1\penalty0 (2):\penalty0 127–190, September 1999.
\newblock ISSN 1573-7713.
\newblock \doi{10.1023/a:1010091220143}.
\newblock URL \url{http://dx.doi.org/10.1023/A:1010091220143}.

\bibitem[Savinov et~al.(2018)Savinov, Dosovitskiy, and Koltun]{savinov2018sptm}
Nikolay Savinov, Alexey Dosovitskiy, and Vladlen Koltun.
\newblock Semi-parametric topological memory for navigation.
\newblock In \emph{International Conference on Learning Representations (ICLR)}, 2018.

\bibitem[Shah et~al.(2021)Shah, Eysenbach, Kahn, Rhinehart, and Levine]{shah2021ving}
Dhruv Shah, Benjamin Eysenbach, Gregory Kahn, Nicholas Rhinehart, and Sergey Levine.
\newblock {ViNG}: Learning open-world navigation with visual goals.
\newblock In \emph{IEEE International Conference on Robotics and Automation (ICRA)}, 2021.
\newblock \doi{10.1109/ICRA48506.2021.9561936}.

\bibitem[Sobal et~al.(2025)Sobal, Zhang, Cho, Balestriero, Rudner, and LeCun]{sobal2025pldm}
Uladzislau Sobal, Wancong Zhang, Kyunghyun Cho, Randall Balestriero, Tim G.~J. Rudner, and Yann LeCun.
\newblock Learning from reward-free offline data: A case for planning with latent dynamics models.
\newblock In \emph{Advances in Neural Information Processing Systems (NeurIPS)}, 2025.
\newblock \doi{10.52202/085713-1465}.

\bibitem[Tassa et~al.(2018)Tassa, Doron, Muldal, Erez, Li, de~Las~Casas, Budden, Abdolmaleki, Merel, Lefrancq, Lillicrap, and Riedmiller]{tassa2018deepmind}
Yuval Tassa, Yotam Doron, Alistair Muldal, Tom Erez, Yazhe Li, Diego de~Las~Casas, David Budden, Abbas Abdolmaleki, Josh Merel, Andrew Lefrancq, Timothy Lillicrap, and Martin Riedmiller.
\newblock {DeepMind} control suite.
\newblock \emph{arXiv preprint arXiv:1801.00690}, 2018.

\bibitem[Todorov et~al.(2012)Todorov, Erez, and Tassa]{todorov2012mujoco}
Emanuel Todorov, Tom Erez, and Yuval Tassa.
\newblock {MuJoCo}: A physics engine for model-based control.
\newblock In \emph{IEEE/RSJ International Conference on Intelligent Robots and Systems (IROS)}, 2012.

\bibitem[Yang et~al.(2020)Yang, Zhang, Morcos, Pineau, Abbeel, and Calandra]{yang2020plan2vec}
Ge~Yang, Amy Zhang, Ari Morcos, Joelle Pineau, Pieter Abbeel, and Roberto Calandra.
\newblock {Plan2Vec}: Unsupervised representation learning by latent plans.
\newblock In \emph{Learning for Dynamics \& Control Conference (L4DC)}, 2020.

\bibitem[Zhang et~al.(2021)Zhang, Yang, and Stadie]{zhang2021l3p}
Lunjun Zhang, Ge~Yang, and Bradly~C Stadie.
\newblock World model as a graph: Learning latent landmarks for planning.
\newblock In \emph{International Conference on Machine Learning (ICML)}, 2021.

\bibitem[Zhang et~al.(2026)Zhang, Terver, Zholus, Chitnis, Sutaria, Assran, Balestriero, Bar, Bardes, LeCun, et~al.]{zhang2026hierarchical}
Wancong Zhang, Basile Terver, Artem Zholus, Soham Chitnis, Harsh Sutaria, Mido Assran, Randall Balestriero, Amir Bar, Adrien Bardes, Yann LeCun, et~al.
\newblock Hierarchical planning with latent world models.
\newblock \emph{arXiv preprint arXiv:2604.03208}, 2026.

\bibitem[Zhou et~al.(2025)Zhou, Pan, LeCun, and Pinto]{zhou2024dinowm}
Gaoyue Zhou, Hengkai Pan, Yann LeCun, and Lerrel Pinto.
\newblock {DINO-WM}: World models on pre-trained visual features enable zero-shot planning.
\newblock In \emph{International Conference on Machine Learning (ICML)}, 2025.

\end{thebibliography}
